\newtheorem{assumption}{Assumption}
\def\eqref#1{equation~\ref{#1}}
\def\1{\bm{1}}
\DeclareMathAlphabet{\mathsfit}{\encodingdefault}{\sfdefault}{m}{sl}
\SetMathAlphabet{\mathsfit}{bold}{\encodingdefault}{\sfdefault}{bx}{n}
\newtheorem{theorem}{Theorem}
\newtheorem{corollary}{Corollary}
\newlength\titlebox \setlength\titlebox{2.375in}
\begin{document}

\twocolumn[
\begin{center}
    \LARGE \bfseries Predicting Long Term Sequential Policy Value Using Softer Surrogates \\
    \vspace{1em}
    \large Hyunji Nam, Allen Nie, Ge Gao, Vasilis Syrgkanis, Emma Brunskill \\
    \vspace{1em}
    \large \normalfont Stanford University
    \vspace{1em}
\end{center}

]
 \begin{abstract}
Off-policy policy evaluation (OPE) estimates the outcome of a new policy using historical data collected from a different policy. However, existing OPE methods cannot handle cases when the new policy introduces novel actions. This issue commonly occurs in real-world domains, like healthcare, as new drugs and treatments are continuously developed. Novel actions necessitate on-policy data collection, which can be burdensome and expensive if the outcome of interest takes a substantial amount of time to observe--for example, in multi-year clinical trials. This raises a key question of how to predict the long-term outcome of a policy  after only observing its short-term effects? Though in general  this problem is intractable, under some surrogacy conditions, the short-term on-policy data can be combined with the long-term historical data to make accurate predictions about the new policy's long-term value. In two simulated healthcare examples--HIV and sepsis management--we show that our estimators can provide accurate predictions about the policy value only after observing 10\% of the full horizon data. We also provide finite sample analysis of our doubly robust estimators.
\end{abstract}

\section{INTRODUCTION}
Policy evaluation of long-term outcomes under a new target policy using historical data collected under a different behavior policy is broadly investigated in offline reinforcement learning ~\citep{dudik2011doublyrobustpolicyevaluation, thomas2016dataefficient,sergey2020}. Off-policy policy evaluation (OPE) methods rely on shared coverage of the state and action spaces taken by a behavioral policy and a new target policy. However, in real-world practices, with rapid technological advances, new actions are constantly being developed and tested to enhance the efficacy of policies. When the coverage assumption is violated due to the introduction of novel actions, existing methods cannot be directly applied and provide reliable estimates of the target policy's value. This requires testing the new policy, which generally take a long horizon to validate, such as several years for a new drug in clinical trials. 

\begin{figure}
\begin{center}
\centerline{\includegraphics[width=\columnwidth]{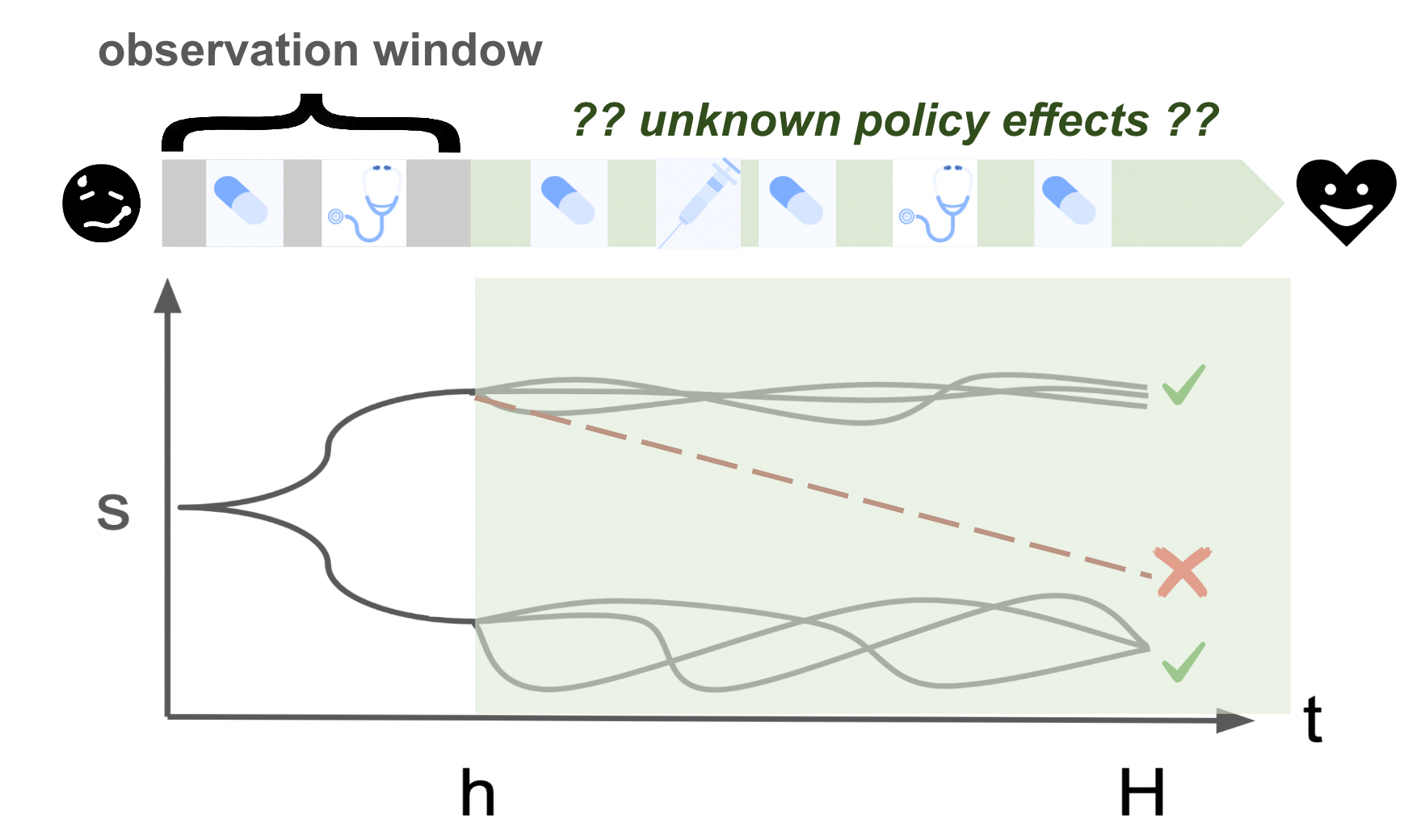}}
\label{fig:diagram}
\caption{We are interested in the task of predicting the patient's long-term outcome only after a short observation window. The standard surrogacy assumption ~\cite{athey2019} fails due to the red trajectory where later decisions cause a very different outcome than other trajectories with the same initial $h$ observations. In this work we leverage a narrower definition of surrogacy similar to ~\citet{battochi2021}, and show it enables us to perform effective policy evaluation.} 
\end{center}
\end{figure}


Naively deploying the new policy to observe its long-term outcome can be burdensome and expensive. Especially in high-stake domains like healthcare, the outcome of interest takes a substantial amount of time to observe. Determining the policy's value based only on its short-term effect may also be misleading; for example, patients may respond well to aggressive cancer treatments in the short term, but the long-term health effects may be suboptimal, and even harmful~\cite{MORGAN199865}. Many other domains, such as education and e-commerce, also face similar challenges. For example, it is common to measure the impact of using an intelligent tutoring system over the entire school year by the end-of-year assessments ~\citep{zheng2019using},
and the impact of ads or promotions over a multi-month customer churn or engagement ~\citep{zhang2023evaluating}. While the new policy's long-term outcome is unknown or expensive to observe, one may be able to estimate the long-term outcome using some available data collected under a different policy with both short- and long-term outcomes. Naturally, this has led to the interest in estimating the long-term outcome of a target policy using its short-term data and full-horizon historical data of a different policy and prior works have achieved inspiring results under specific assumptions~\citep{saito2024long,cheng2020,tang2022reinforcement,zhang2023evaluating,tran2024icml,shi2023dynamic}

Without further assumption, this problem is intractable to solve for sequential policies taking different actions over a long horizon. As a simple adversarial example of why this is infeasible to solve without structural assumptions, consider two deterministic policies that go through the same chain of states except at the last time-step. One policy leads to the optimal last state with a reward of 100, and the other policy leads to the suboptimal state with a zero reward. Since both policies have identical effects on the states until the last step, they require observing for the full-horizon in order to be distinguished from each other.

One possible structural assumption to make is surrogate index, which is well studied in causal inference and econometrics~\citep{athey2019, athey2024estimatingtreatmenteffectsusing}. In this framework, we assume ``surrogates” observed from a short horizon make the long-term outcome independent of future actions. This is a very strong assumption to make in sequential decision-making settings that are our primary focus. In the above adversarial example, surrogacy only holds with the full horizon. In an effort to relax this assumption, \citet{battochi2021} introduces a ``dynamic invariance" assumption. They define a behavioral policy and a new target policy, and require that the behavioral policy’s estimand and the target policy’s estimand have the same relationship with the surrogates. We make a similar assumption but for the purpose of estimating the long-term value of a sequential policy rather than the delayed effect of a single-step intervention.

We propose two estimators to tackle the new empirical problem, i.e., estimating values of long-term sequential policies with short-term data: one based on weighted regression and the other using doubly robust methods to estimate the long-term policy value using the short-term data and the historical data from a behavioral policy. Our estimators operate under the assumption that conditioned on the short-horizon state trajectories, the expected future returns of the new policy matches the expected future returns of the behavioral policy. We call the short-horizon state trajectories ``softer surrogates." While we will shortly discuss this in greater depth, we briefly give some motivating scenarios where this is plausible, which are visualized in Figure \ref{fig:diagram}.

Consider two treatment regimes: with the first treatment, half of the patient population responds well to the treatment (top trajectories in Fig. \ref{fig:diagram}) and the other half struggles to recover (bottom trajectories). A new drug is introduced which works well for 20\% of the previously struggling patients. These treatments lead to different distributions of patient outcomes--first treatment works for 50\% and the second treatment works for 70\%--but importantly observe that if the patients are improved after the initial 5 days of the treatment, they will continue to observe positive effects as long as they are continued to be treated. In other words, given the first 5 days observations, both treatment regimes have the same expected future returns. However, the standard surrogacy assumption would require an even stronger condition that the patients who are recovering after the 5 days will experience the same future returns even if they stop their treatment after the initial period. This example highlights that our proposed assumption is much more realistic than the standard surrogate assumption. 

Even when the soft surrogate condition does not hold, and therefore, our estimators may have some bias, our empirical results on the simulators of HIV treatment~\citep{ernst} and sepsis management~\citep{oberst2019counterfactual} show that our methods can still produce accurate estimates of the policy's long-term value. While one might expect this to work only when the soft surrogates are constructed from relatively long-horizon observations, our experiments show that even with only 10\% of the full horizon data, our estimators can accurately predict whether the new policy's expected returns will outperform the behavioral policy's with significant p-values $< 10^{-6}$ and achieve lower mean-squared-errors compared to existing baselines.

Our work tackles the important problem of estimating long term policy value with novel actions, which cannot be handled by existing OPE methods and is expensive to solve with Monte Carlo sampling. We propose estimators based on soft surrogates constructed from short-horizon data of the new policy and leverage long-horizon data of the behavioral policy in learning these estimators by relaxing the standard surrogacy assumption. We provide finite sample analysis of our estimators based on doubly robust methods~\cite{chernozhukov2017doubledebiasedmachinelearningtreatment}, which may be of independent interest, and empirical results in clinical simulators of HIV and sepsis treatment, showcasing the promising application of our methods to realistic domains.
\section{RELATED WORK}
A powerful approach to predicting the long-term outcomes of novel interventions comes from seminal work on surrogacy~\citep{prentice1989surrogate,athey2019,saito2024long}. The surrogacy assumption (or surrogate index) is when one or more intermediate variables render a past intervention independent of the delayed (future) outcome. 
In such work, historical data is used to learn estimators for predicting the long-term outcome given the short-term surrogate observations, and these estimators are then applied to new short-term observations for a new intervention. These ideas have been more recently used with machine learning methods~\citep{cheng2020,zhang2023evaluating}.
Other papers have improved the surrogate methods to be more robust even when surrogacy does not strictly hold~\citep{saito2024long, kallus2024rolesurrogatesefficientestimation}, and some work has leveraged surrogates while actively exploring in a delayed multi-armed bandit setting~\citep{grover2018best,mcdonald2023impatient}. Almost all such work has focused on when there is a single intervention, followed by surrogate measures, and then a delayed outcome.  In contrast, our aim is to estimate the value of a sequential RL policy that will continue to influence later states and rewards after the short observation window.

To our knowledge, there has been very little work on the setting we consider of combining novel actions and long-term policy value estimation. Empirically, \citet{severson} showed that domain-specific feature engineering can be done to predict battery lifetime given short-term data, and \citet{attia_battery} built on this work by applying Bayesian optimization to quickly learn  good charging protocols. Likely most similar to our work is ~\citet{battochi2021} which used surrogate indices from short-term data to estimate the impact of a novel action. Their focus is on estimating the long-term performance of a single action followed by no further interventions. In contrast,  we are interested in RL policies which continue to take different actions in future horizons, rather than having the effects of the old actions persist over time. 

Ultimately, we show that this problem can be framed as inference on a linear functional of a regression function under a covariate shift. From this perspective it falls in the general setting analyzed in \citet{chernozhukov2023automatic}. We show how the techniques in \citet{chernozhukov2023automatic} can be utilized to provide unbiased and doubly robust estimators that allow for the construction of asymptotically valid confidence intervals, under assumptions on the achievable estimation rates for nuisance parameters (e.g. value function, density ratio). One technical contribution of our work is to provide finite sample high-probability analogues of the asymptotic results given by  \citet{chernozhukov2023automatic} in the case of covariate shift. Our results provide estimation rates with exponential tails, which is not covered by prior work with finite sample analysis ~\cite{chernozhukov2022simplegeneraldebiasedmachine}, and for any estimation rate regime of the nuisance components, which can be of independent interest.

\section{Problem Setting \& Notation}
We consider an $H$-step sequential decision process consisting of states $\mathcal S$ and actions $\mathcal A$, and use $t$ to denote the time-step. We allow for non-Markov decision processes where the transition dynamics is defined as a function of all past states and actions, $P(s_{t+1} | s_0, a_0, ..., s_t, a_t)$ as the Markov assumption is not required by our estimators. We define the reward as a function of states $R(s)$, and the returns $G$ as the sum of per-step rewards over the full $H$-step horizon. We define the trajectory \textit{only as a sequence of states and rewards} (excluding the actions), $\tau_{0:t} = (s_0, r_0, ..., s_t, r_t)$, and similarly, $\tau_{h:H}$ denotes the partial trajectory from $h$ to $H$. Policy is a mapping from states to actions: $\pi: \mathcal S \mapsto \mathcal A$, and the corresponding policy value is the expected returns of executing this policy from some initial state distribution $d_0: \mathcal S \mapsto [0, 1]$: $V^\pi = \mathbb E_{\tau_{0:H} \sim \pi, d_0}[\sum_{t=0}^H r_t]$. In order to incorporate novel actions, we additionally define $\mathcal A^+$, which need not overlap with $\mathcal A$. 

We assume the historical data $\mathcal D_b$ is collected under a behavioral policy $\pi_b$, which consists of $(\tau_{0:h}, \tau_{h:H}, G)$ representing the short-horizon trajectory, the long-horizon trajectory, and the returns. Now, we are given a new policy to evaluate (target policy): $\pi_e: \mathcal S \mapsto \mathcal A^+$, and the goal is to predict the new policy's value $V^{\pi_e}$ under the same initial state distribution only using the short-term data of the new policy. We call this short-term on-policy data $\mathcal D_e$.

\section{Assumption}
Before introducing our estimators, we first describe the assumptions that will enable the estimation of the long-term policy value only using short-horizon data even when the action spaces differ between the new policy and the behavioral policy. 

\begin{assumption}
The reward is only a function of states, or a history of states.
\end{assumption}
Especially in healthcare domains ~\cite{oberst2019counterfactual, hiv_states}, where states represent the patient's medical conditions, rewards are defined in terms of the patient's states. Similarly in education, states may represent student's knowledge~\cite{Mu_Jetten_Brunskill_2020}, where it's natural to define reward based on student's knowledge acquisition.

\begin{assumption} (Soft surrogacy)\label{assmp:surrogacy} Let $\tau_{0:h}$ be the initial $h$-step trajectory, and $\tau_{h:H}$ be the remaining trajectory. The target policy $\pi_e$ and the historical policy $\pi_b$ satisfy the following relationship with $\tau_{0:h}$: 
\begin{equation}
\mathbb E_{\tau_{h:H} \sim \pi_e} \left[ \sum_{t=h}^H r_t | \tau_{0:h} \right] = \mathbb E_{\tau_{h:H} \sim \pi_b} \left[\sum_{t=h}^H r_t | \tau_{0:h} \right]
\end{equation}
\end{assumption}
Conditioned on the initial sequence $\tau_{0:h}$, the expected future returns under the new policy is the same as the expected future returns under the behavioral policy. Similar condition has been considered by prior work ~\cite{battochi2021}. It's important to highlight that the future trajectories $\tau_{h:H}$ may differ between $\pi_b$ and $\pi_e$ as long as their sum of rewards matches. One sufficient, but not necessary, condition of Assumption 4.2 is if $V_{H-h}^{\pi_b}(s_h) = V_{H-h}^{\pi_e}(s_h)$.

To show that this is a relaxed version of the surrogacy assumption, we compare Assumption 4.2 to the standard surrogacy assumption if it were applied to sequential RL policies, which would be $\forall \pi, \pi' \in \Pi$
\begin{equation}
\mathbb E_{\tau_{h:H} \sim \pi} \left[\sum_{t=h}^H r_t | \tau_{0:h}\right] = \mathbb E_{\tau_{h:H} \sim \pi'} \left[\sum_{t=h}^H r_t | \tau_{0:h}\right]
\end{equation} We make an important observation that with the large action spaces and flexible policy classes, the surrogacy assumption in Equation (2) will not hold in most RL settings. However, we can relax this assumption to focus on a specific target policy and a behavioral policy rather than any realizable policy in the given policy class. While our theory relies on Assumption 4.2, our empirical results with the clinical simulators of HIV and Sepsis management suggest that even when the soft surrogacy assumption is violated, our estimators can give reliable estimates of the long-term policy value with short-term data. In non-healthcare settings, ~\citet{severson, attia_battery} have considered a similar empirical condition to build a prediction model of the battery life time given the short-term discharge data.

\begin{assumption}\label{assmp:coverage}
The behavioral policy $\pi_b$ covers the distribution of short-horizon trajectories generated under the new target policy $\pi_e$ such that
\begin{equation}
\frac{p(\tau_{0:h}|\pi_e)}{p(\tau_{0:h}|\pi_b)} > 0 \quad  \forall p(\tau_{0:h} | \pi_e) > 0
\end{equation}
\end{assumption}
This is analogous to the state and action coverage in offline RL, which requires $\pi_b(.|s) > 0 \; \forall \pi_e(.|s) > 0$~\cite{liu2022offlinepolicyoptimizationeligible}. 


\section{Estimators}
We are interested in estimating values of long-term sequential policies with short-term on-policy data and long-term off-policy data. Algorithm \ref{alg:main_alg} shows how we leverage the historical data along with the limited on-policy samples to handle estimation. We propose two kinds of regression models: unweighted and weighted, and two policy value estimation methods: one directly from the regression outputs and the other based on double machine learning~\cite{chernozhukov2017doubledebiasedmachinelearningtreatment}, and study their theoretical (Section \ref{section:theory}) and empirical properties (Section \ref{section:experiments}).

\begin{algorithm}[tb]
   \caption{Long-term policy value estimation with soft surrogates}
   \label{alg:main_alg}
\begin{algorithmic}
   \STATE {\bfseries Input:} long-term historical data $\mathcal D_b$, new target policy $\pi_e$, $\mathcal D_e = \emptyset$, experiment budget $B$ 
    \FOR{$i=1$ {\bfseries to} $B$}
   \STATE Run $\pi_e$ for short-horizon of $h$ steps.
   \STATE Collect $\tau_{0:h} \sim \pi_e$ into $\mathcal D_e$.
   \ENDFOR
   \STATE Train a regression model $\hat f$ using $\mathcal D_b$ (Eq. 4, 6).
   \STATE Estimate the long-term policy value $\hat V^{\pi_e}$(Eq. 5, 7-8).
   \STATE {\bfseries Return:} $\hat V^{\pi_e}$
\end{algorithmic}
\end{algorithm}

\subsection{Soft surrogate estimator}
First, we propose a regression model that takes in the short-horizon trajectory as inputs and predicts the full-horizon outcome. Specifically, the regressor $f$ is learned to predict $G$ given $\tau_{0:h}$ in the historical data $\mathcal D_b$. \begin{equation}
f_{\text{soft}} = \arg \min_f \mathbb E_{(\tau_{0:h}, G) \sim \mathcal D_b} [f(\tau_{0:h}) - G]^2.
\end{equation} The long-term value of a new policy $\pi_e$ is computed by averaging $f_{\text{soft}}$ across the short-horizon trajectories in $D_e$: 
\begin{equation}
\hat V^{\pi_e}_{\text{soft}} = \frac{1}{|D_e
|} \sum_{\tau_{0:h} \in \mathcal D_e}f_{\text{soft}}(\tau_{0:h}).
\end{equation} While $f_{\text{soft}}$ can be learned directly from $\mathcal D_b$, it may be helpful to re-weight the loss to prioritize accurate predictions over the short-horizon trajectories that are most likely to occur under the target policy. To accomplish this, we use weighted regression methods ~\cite{sugiyama_breg, sugiyama_cv}, which first estimate the density ratio of the trajectories under the target policy and the behavioral policy, then re-weight the losses according to the estimated density ratios. We follow the prior work by ~\citet{sugiyama_breg} in formulating the problem of density ratio estimation as learning a classifier $h(\tau_{0:h})$ to predict if the sample trajectory came from the behavioral policy or the target policy. This yields the weighted regression model 
\begin{equation}
f_{\text{w-soft}} = \arg \min_f \mathbb E_{(\tau_{0:h}, G) \sim D_b} \left[\hat h(\tau_{0:h}) \left(f(\tau_{0:h}) - G \right)^2 \right],
\end{equation}where $\hat h(\tau_{0:h}) = \frac{p(\tau_{0:h} | \pi_e)}{p(\tau_{0:h} | \pi_b)}$. To predict the long-term policy value, we replace $f_{\text{soft}}$ in equation (4) with $f_{\text{w-soft}}$ and call this estimator $\hat V_{\text{w-soft}}^{\pi_e}$. Our empirical results compare the weighted and the unweighted regression estimates.

\subsection{Doubly robust soft surrogate estimator} We present an estimator based on double machine learning ~\citep{chernozhukov2023automatic}, or de-biased machine learning ~\citep{battochi2021, kallus2024rolesurrogatesefficientestimation} to improve the estimator presented in Section 5.1 to be robust to potential errors in the regression model or the density ratio estimator.  

Following $k$-fold cross fitting ~\citep{chernozhukov2023automatic}, we split the dataset $\mathcal D_b$ into K folds of equal size. Let $\mathcal D^{(k)}_b$ be the samples in fold $k$ and $\bar {\mathcal D}^{(k)}_b$ be its complement. Similarly, split the dataset $\mathcal D_e$ into K folds, and use the same notations $\mathcal D_e^{(k)}$ and $\bar{\mathcal D}_e^{(k)}$ to denote the $k$-th fold and its complement. Let $\hat h^{(k)}$ and $\hat f^{(k)}$ be the density ratio estimator and the regression model trained on $\bar {\mathcal D}^{(k)}$.  The $k$-th cross fit estimate is 
\begin{eqnarray}
\hat V^{(k)}\!\!\!\! &=&\!\!\!\! \frac{1}{|\mathcal D^{(k)}_b|} \sum_{(\tau_{0:h}, G) \in \mathcal D^{(k)}_b} \hat h^{(k)}(\tau_{0:h}) \left( G - \hat f^{(k)}(\tau_{0:h}) \right)  \nonumber \\
\!\!\!\! &&\!\!\!\! + \frac{1}{|\mathcal D_e^{(k)}|} \sum_{\tau_{0:h} \in \mathcal D_e^{(k)}} \hat f^{(k)} (\tau_{0:h}),
\end{eqnarray} where the first sum is over a subset of the historical data gathered by $\pi_b$, and the second term is over a subset of the short-horizon on-policy data sampled from the target policy $\pi_e$. The second term serves as a baseline and is a slight departure from the standard double machine learning, since here we have access to limited on-policy data.

The final estimate of the  long-term policy value is the average across all cross-fit estimates: 
\begin{equation}
\hat V^{\pi_e}_{\text{dr}} = \frac{1}{K} \sum_{i=1}^k \hat V^{(k)}
\end{equation}
Similarly as before, $\hat f^{(k)}$ in Equation (7) can be either the unweighted (Equation 6) or the weighted (Equation 4) regression model. Depending on the choice between Equation (5) and Equation (8) as well as the regression model, there are 4 possible estimators for the target policy--namely, (i) soft surrogate estimator $\hat V_{\text{soft}}$ (unweighted by default), (ii) weighted soft surrogate estimator $\hat V_{\text{W-soft}}$, (iii) doubly robust estimator $\hat V_{\text{dr-soft}}$, and (iv) doubly robust weighted estimator $\hat V_{\text{dr-w-soft}}$. Our toy example compares the robustness of these estimators to model mis-specification, while our clinical simulation results demonstrate their usefulness in realistic scenarios even with possible violations of Assumption 4.2.
\section{THEORY}\label{section:theory}
We focus on the finite sample analysis of $\hat V^{\pi_e}_{\text{dr}}$, which may be of independent interest. Closest prior work~\cite{chernozhukov2022simplegeneraldebiasedmachine} also gives a finite sample bound, but our result shows exponential tails in the covariate shift settings. Our analysis depends on the following $L_2$ errors between the estimated nuisance parameters ($\hat f, \hat h$) and the true nuisance parameters ($f, h$). For notational simplicity, we drop the subscript from the short-horizon trajectory $\tau_{0:h}$. For detailed exposition, review Appendix \ref{appendix:theory}.

\begin{equation}
\epsilon_e = \sqrt{\mathbb E_{\tau \sim \pi_e} \left[\left(\hat f^{(k)}(\tau) - f(\tau) \right)^2 \right]}
\end{equation}

\begin{equation}
\epsilon_b = \sqrt{\mathbb E_{\tau \sim \pi_b} \left[\left(\hat f^{(k)}(\tau) - f(\tau) \right)^2 \right]}
\end{equation}

\begin{equation}
\epsilon_h = \sqrt{\mathbb E_{\tau \sim \pi_b} \left[\left(\hat h^{(k)}(\tau) - h(\tau) \right)^2 \right]}
\end{equation}

Note that we are defining these errors with respect to the two different data distributions: Expression (9) depends on the short-horizon on-policy data, while (10) and (11) depend on the historical data distribution.

\begin{theorem}[Variance-Based Rate for DR]\label{thm:variance-dr}

   Assume that $|G|, |f^{(k)}(\tau)|, |f(\tau)|$ are asymptotically bounded by $C_1H$ and $\hat{h}^{(k)}(\tau), h(\tau)$ are asymptotically bounded by $C_2$, where $C_1$ and $C_2$ are constants.
   Under Assumptions \ref{assmp:surrogacy} and \ref{assmp:coverage}, w.p. at least $1-\delta$:
    \begin{eqnarray*}
        |\hat V^{\pi_e}_{\text{dr}}-V^{\pi_e}| \leq \sqrt{\frac{2\text{Var}_{\tau_ \sim \pi_e}(f(\tau))\log(4K/\delta)}{|\mathcal D_e|}}   \\
         + \sqrt{\frac{2 \mathbb E_{\tau \sim  \pi_b}\left[h(\tau)^2\,(G - f(\tau))^2\right]\log(4K/\delta)}{|\mathcal D_b|}}    \\
         + \frac{1}{K}\sum_{k=1}^K  \epsilon_{b} \cdot \epsilon_{h} + \underbrace{\frac{2 H K\log(4K/\delta)}{|\mathcal D_e|} }_{\text{(4)}} \\
         + \underbrace{\max_{k}\epsilon_{e}  \sqrt{\frac{2K\log(4K/\delta)}{|\mathcal D_e|}}}_{\text{depends on consistency of $\hat f$ w.r.t. $\pi_e$}}
          + \underbrace{\frac{4C_1 C_2 H K\log(4K/\delta)}{
          |\mathcal D_b|
          }}_{\text{(6)}} 
         \\
         + \underbrace{3 C_1 H \!\max_{k}\left\{
    \epsilon_{b}  + \!\epsilon_{h}\right\}\!\!\sqrt{\frac{2K\log(4K/\delta)}{|\mathcal D_b|}}}_{\text{depends on consistency of $\hat f, \hat k$ w.r.t. $\pi_b$}} 
    \end{eqnarray*}
\end{theorem}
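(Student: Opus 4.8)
\emph{Step 1: reduce to a linear functional under covariate shift.} First I would show that under Assumption~4.1 and Assumption~\ref{assmp:surrogacy} the target value admits the representation $V^{\pi_e}=\mathbb{E}_{\tau\sim\pi_e}[f(\tau)]$, where $f(\tau)=\mathbb{E}_{\pi_b}[G\mid\tau_{0:h}]$ is exactly the population minimizer of the least-squares objective in Eq.~(4). The argument splits $G=\sum_{t<h}r_t+\sum_{t\ge h}r_t$: the reward prefix is $\tau_{0:h}$-measurable (this is where the reward-structure assumption is used), and soft surrogacy replaces the $\pi_e$-expectation of the tail block by its $\pi_b$-expectation, so $\mathbb{E}_{\pi_e}[\sum_t r_t]=\mathbb{E}_{\tau\sim\pi_e}[\mathbb{E}_{\pi_b}[G\mid\tau_{0:h}]]$. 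Together with Assumption~\ref{assmp:coverage}, which makes $h(\tau)=p(\tau\mid\pi_e)/p(\tau\mid\pi_b)$ well-defined and gives the change-of-measure identity $\mathbb{E}_{\pi_b}[h(\tau)g(\tau)]=\mathbb{E}_{\pi_e}[g(\tau)]$, this puts the problem in the automatic-debiasing framework of \citet{chernozhukov2023automatic}, with oracle moment $\psi(\tau,G;f,h)=f(\tau)+h(\tau)\bigl(G-f(\tau)\bigr)$ whose appropriate ($\pi_e$- and $\pi_b$-) mixture expectation equals $V^{\pi_e}$ and which is Neyman-orthogonal in $(f,h)$.

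\emph{Step 2: per-fold decomposition and the bias term.} For each fold $k$ I would write $\hat V^{(k)}-V^{\pi_e}$ as the sum of: (i) an on-policy stochastic term $\frac{1}{|\mathcal D_e^{(k)}|}\sum_{\mathcal D_e^{(k)}}\bigl(f(\tau)-V^{\pi_e}\bigr)$; (ii) an off-policy stochastic term $\frac{1}{|\mathcal D_b^{(k)}|}\sum_{\mathcal D_b^{(k)}}h(\tau)\bigl(G-f(\tau)\bigr)$, which has mean zero because $\mathbb{E}_{\pi_b}[G\mid\tau_{0:h}]=f(\tau)$; (iii) two ``nuisance-error fluctuation'' terms obtained by swapping $(f,h)$ for $(\hat f^{(k)},\hat h^{(k)})$ and subtracting conditional means; and (iv) a deterministic bias term $\mathbb{E}_{\pi_e}[\hat f^{(k)}]+\mathbb{E}_{\pi_b}[\hat h^{(k)}(G-\hat f^{(k)})]-V^{\pi_e}$. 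A short computation — again using $\mathbb{E}_{\pi_b}[G\mid\tau_{0:h}]=f(\tau)$ to kill cross terms and the change-of-measure identity to cancel the leftover $\mathbb{E}_{\pi_e}[f-\hat f^{(k)}]$ against the $\mathbb{E}_{\pi_e}[\hat f^{(k)}]$ baseline — collapses the bias term to the pure product $\mathbb{E}_{\pi_b}[(\hat h^{(k)}-h)(f-\hat f^{(k)})]$, which Cauchy--Schwarz bounds by $\epsilon_b\epsilon_h$; averaging over $k$ gives the $\frac1K\sum_k\epsilon_b\epsilon_h$ term. The structural point I would exploit is that the true-nuisance terms (i)--(ii) do not depend on $k$, so that in $\hat V^{\pi_e}_{\text{dr}}=\frac1K\sum_k\hat V^{(k)}$ the fold-wise sums recombine into single i.i.d. averages over all of $\mathcal D_e$, resp. $\mathcal D_b$; this is what yields $|\mathcal D_e|$ and $|\mathcal D_b|$ (not $|\mathcal D_e|/K$) in the denominators of the first two displayed terms and avoids an extra $\sqrt K$.

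\emph{Step 3: concentration.} I would then apply Bernstein's (or Bennett's) inequality to each piece. For the two aggregated true-nuisance averages the variance proxies are exactly $\text{Var}_{\pi_e}(f)$ and $\mathbb{E}_{\pi_b}[h^2(G-f)^2]$ and the ranges are $O(C_1H)$ and $O(C_1C_2H)$ by the boundedness hypotheses, producing the first two $\sqrt{\cdot\,\log(4K/\delta)/|\mathcal D_{\cdot}|}$ terms plus linear-in-$1/|\mathcal D_{\cdot}|$ remainders feeding terms (4) and (6). For the fold-wise fluctuation terms I would condition on the complement fold $\bar{\mathcal D}^{(k)}$ — which is precisely why cross-fitting is used: given $\bar{\mathcal D}^{(k)}$ the estimators $\hat f^{(k)},\hat h^{(k)}$ are fixed and the fold-$k$ samples are i.i.d. from $\pi_e$ resp. $\pi_b$ — so Bernstein applies with variance proxy $\epsilon_e^2$ on the $\pi_e$ side and, after the expansion $\hat h(G-\hat f)-h(G-f)=\hat h(f-\hat f)+(\hat h-h)(G-f)$ and boundedness, $O\bigl((C_1H)^2(\epsilon_b+\epsilon_h)^2\bigr)$ on the $\pi_b$ side. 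Substituting $|\mathcal D_e^{(k)}|=|\mathcal D_e|/K$, bounding $\frac1K\sum_k|\cdot|\le\max_k|\cdot|$, and a union bound over the $O(K)$ events with budget $\delta/(4K)$ each assemble exactly the $\max_k\epsilon_e\sqrt{2K\log(4K/\delta)/|\mathcal D_e|}$ and $3C_1H\max_k\{\epsilon_b+\epsilon_h\}\sqrt{2K\log(4K/\delta)/|\mathcal D_b|}$ terms together with the collected higher-order remainders. A final triangle inequality over all pieces gives the stated bound.

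\emph{Main obstacle.} The delicate step is Step 2: verifying genuine Neyman orthogonality in this non-standard construction, where the $\hat f$-baseline is averaged over the \emph{separate} on-policy sample $\mathcal D_e$ rather than over the sample used for the correction term, so that the bias is the pure product $\epsilon_b\epsilon_h$ with no surviving first-order $\epsilon_b$ or $\epsilon_h$ term — this is what makes the rate doubly robust and is where the tower property for $G$ and the identity $\mathbb{E}_{\pi_b}[hg]=\mathbb{E}_{\pi_e}[g]$ must both be invoked in just the right places. The remaining variance bookkeeping, the $n/K$ substitutions, and the union bound are routine by comparison.
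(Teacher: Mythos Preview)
Your proposal is correct and follows essentially the same route as the paper: the same add--subtract of the true nuisances to recombine the fold-wise oracle terms into single i.i.d.\ averages over all of $\mathcal D_e$ and $\mathcal D_b$, the same product-bias identity (your item~(iv), the paper's Theorem~\ref{appendix:dr_bias_bound}) bounded by Cauchy--Schwarz, and the same conditional Bernstein on the fold-wise fluctuations followed by a union bound over $2K+2\le 4K$ events. The only cosmetic difference is the algebraic split of $B_i(\hat f,\hat h)-B_i(f,h)$: you use $\hat h(f-\hat f)+(\hat h-h)(G-f)$ whereas the paper uses $(\,\hat h-h)(G-\hat f)+h(f-\hat f)$, but both yield the same variance proxy up to constants.
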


We are interested in understanding when the first two terms, which depend on the variance under the true, unknown nuisance parameters $f, h$, dominate the bound. Note that the terms (4) and (6) are dominated by the first two terms since these have a slower rate dependence on $|\mathcal D_b|$ and $|\mathcal D_e|$. As long as $\hat f$ is consistent for the target policy's data distribution, the fifth term is dominated by the first term. Similarly, if the nuisance parameters, $\hat f, \hat h$, are consistent under the behavioral policy's data distribution, the last term is dominated by the second term. Finally the third term is a product of the errors under the behavioral policy's data distribution. For this to be of lower order than the first two terms, it may be particularly important for $\hat f$ to behave well under the behavioral policy's distribution. This suggests that fitting $\hat f$ using unweighted regression in Equation (4) may be beneficial to the weighted regression. We include the full proof in \ref{appendix:dr_bias_bound} (bias) and \ref{thm:variance-dr} (variance).

\begin{corollary}
If $\hat f$ and $\hat h$ are asymptotically consistent at any rate, then $\hat V^{\pi_e}_{\text{dr}}$ is consistent. 
\end{corollary}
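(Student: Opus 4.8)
The plan is to derive the corollary directly from Theorem~\ref{thm:variance-dr} by checking that every term on the right-hand side vanishes as $|\mathcal D_b|, |\mathcal D_e| \to \infty$. First I would fix the confidence level $\delta \in (0,1)$ and the number of folds $K$, so that $\log(4K/\delta)$ is a constant. With the boundedness assumptions on $|G|, |f^{(k)}|, |f|$ (order $C_1 H$) and on $\hat h^{(k)}, h$ (order $C_2$), the quantities $\text{Var}_{\tau \sim \pi_e}(f(\tau))$ and $\mathbb E_{\tau \sim \pi_b}[h(\tau)^2 (G - f(\tau))^2]$ in the first two terms are finite constants; hence those two terms are $O(1/\sqrt{|\mathcal D_e|})$ and $O(1/\sqrt{|\mathcal D_b|})$ respectively, and both $\to 0$. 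Likewise the term labeled (4) is $O(H K \log(4K/\delta)/|\mathcal D_e|) \to 0$ and the term labeled (6) is $O(C_1 C_2 H K \log(4K/\delta)/|\mathcal D_b|) \to 0$; these need no consistency hypothesis at all.

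Next I would handle the three terms that involve the nuisance errors $\epsilon_e, \epsilon_b, \epsilon_h$. By hypothesis $\hat f$ and $\hat h$ are asymptotically consistent at some (possibly arbitrarily slow) rate, which is precisely the statement that $\epsilon_e \to 0$, $\epsilon_b \to 0$, $\epsilon_h \to 0$ as the training sample sizes grow (the errors in Eqs.~(9)--(11) are $L_2$ errors on the complement folds $\bar{\mathcal D}^{(k)}$, whose sizes grow proportionally to $|\mathcal D_b|$ and $|\mathcal D_e|$). Then: the product term $\frac{1}{K}\sum_k \epsilon_b \cdot \epsilon_h \to 0$ since each factor $\to 0$; the fifth term $\max_k \epsilon_e \sqrt{2K\log(4K/\delta)/|\mathcal D_e|}$ is a product of something going to $0$ with something bounded (indeed also going to $0$), hence $\to 0$; and the last term $3C_1 H \max_k\{\epsilon_b + \epsilon_h\}\sqrt{2K\log(4K/\delta)/|\mathcal D_b|} \to 0$ by the same reasoning. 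Summing the finitely many terms, the entire bound tends to $0$, so for every $\delta$ we have $|\hat V^{\pi_e}_{\text{dr}} - V^{\pi_e}| \to 0$ with probability at least $1-\delta$, which is convergence in probability; this is the claimed consistency.

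The one point requiring a little care — and the only genuine obstacle — is bookkeeping the relationship between "sample size" and the errors $\epsilon_e, \epsilon_b, \epsilon_h$: these are defined as population $L_2$ errors of estimators fit on $\bar{\mathcal D}^{(k)}$, so "consistent at any rate" must be interpreted as $\epsilon$ tending to $0$ as $|\bar{\mathcal D}^{(k)}| \to \infty$, and one must note that with $K$ fixed, $|\bar{\mathcal D}_b^{(k)}| = \frac{K-1}{K}|\mathcal D_b| \to \infty$ (and similarly for $\mathcal D_e$) so the hypothesis does apply. A secondary subtlety is that $\epsilon_e, \epsilon_b, \epsilon_h$ are themselves random (they depend on the training folds), so the statement "$\to 0$" is again in probability, and one should phrase the conclusion as: for any $\eta > 0$, choosing sample sizes large enough makes $\mathbb P(|\hat V^{\pi_e}_{\text{dr}} - V^{\pi_e}| > \eta) \le \delta$ plus the (vanishing) probability that the nuisance errors exceed a chosen threshold; a union bound closes this. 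I would also remark that no rate on the nuisances is needed precisely because the fifth and seventh terms multiply $\epsilon$ by a factor that already vanishes on its own, so even a $\epsilon = O(1)$-but-$o(1)$ rate suffices — this is what makes the corollary a clean "any rate" statement and is worth one sentence of emphasis.
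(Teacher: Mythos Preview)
Your proposal is correct and follows essentially the same route as the paper: the paper's proof simply invokes Theorem~\ref{thm:variance-dr}, observes that terms 1, 2, 4, and 6 vanish by boundedness as $|\mathcal D_b|,|\mathcal D_e|\to\infty$, and that terms 3, 5, and 7 vanish once the nuisances are consistent. Your write-up is in fact more careful than the paper's (you spell out the fold-size bookkeeping and the in-probability nature of the nuisance errors), but the underlying argument is identical.
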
 This is a direct result of Theorem 6.1 as the error terms go to zero asymptotically (see proof in \ref{appendix:dr_consistency}). We also characterize the doubly robust property of $\hat{V}^{\pi_e}_{\text{dr}}$.

\begin{theorem}
Define the regression error as $\triangle(\tau, \hat f) = \hat f(\tau)-f(\tau)$ and the relative density ratio estimate as $\delta(\tau, \hat h) = \frac{\hat h(\tau)}{h(\tau)}$. Under Assumptions 4.2 and 4.3, the bias of $\hat V^{\pi_e}_{\text{dr}}$ is $\hat V^{\pi_e}_{\text{dr}} - V^{\pi_e}$
\begin{equation}
=\frac{1}{K}\sum_{k=1}^K \mathbb E_{\tau \sim \pi_e} \left[ \triangle(\tau, \hat f^{(k)}) \left(1 - \delta(\tau, \hat h^{(k)})\right)\right],
\end{equation} which is the average of the product-bias terms across the K folds. 
\end{theorem}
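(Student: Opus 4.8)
The plan is to compute the conditional expectation of each cross-fit term $\hat V^{(k)}$ given the auxiliary folds $\bar{\mathcal D}^{(k)}$ on which $\hat f^{(k)}$ and $\hat h^{(k)}$ are trained, exploiting that under $K$-fold cross fitting these nuisance estimates are independent of the fold-$k$ data used to form $\hat V^{(k)}$. Write $f(\tau) = \mathbb{E}_{\pi_b}[G \mid \tau_{0:h} = \tau]$ for the population regression function (the minimizer in Eq. (4)) and $h(\tau) = p(\tau \mid \pi_e)/p(\tau \mid \pi_b)$ for the true density ratio, which is well defined on the support of $\pi_e$ by Assumption \ref{assmp:coverage}. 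The first preliminary step is to note that Assumption \ref{assmp:surrogacy} upgrades to an identity on the full return: since $\tau_{0:h}$ already determines $r_0,\dots,r_{h-1}$, adding $\sum_{t=0}^{h-1} r_t$ to both sides of Eq. (1) gives $\mathbb{E}_{\pi_e}[G \mid \tau] = \mathbb{E}_{\pi_b}[G \mid \tau] = f(\tau)$, hence $V^{\pi_e} = \mathbb{E}_{\tau \sim \pi_e}[f(\tau)]$.

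Next I would take expectations in Eq. (7) term by term, conditional on $\bar{\mathcal D}^{(k)}$. The $\mathcal D_b$ sum is an i.i.d.\ average, hence unbiased for $\mathbb{E}_{(\tau,G)\sim\pi_b}[\hat h^{(k)}(\tau)(G - \hat f^{(k)}(\tau))]$; applying the tower rule over $G \mid \tau$ replaces $G$ by $f(\tau)$ and yields $-\mathbb{E}_{\tau\sim\pi_b}[\hat h^{(k)}(\tau)\,\triangle(\tau,\hat f^{(k)})]$, where $\triangle(\tau,\hat f) = \hat f(\tau) - f(\tau)$. The $\mathcal D_e$ sum is unbiased for $\mathbb{E}_{\tau\sim\pi_e}[\hat f^{(k)}(\tau)] = V^{\pi_e} + \mathbb{E}_{\tau\sim\pi_e}[\triangle(\tau,\hat f^{(k)})]$ by the preliminary step. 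Subtracting $V^{\pi_e}$ leaves $\mathbb{E}_{\tau\sim\pi_e}[\triangle(\tau,\hat f^{(k)})] - \mathbb{E}_{\tau\sim\pi_b}[\hat h^{(k)}(\tau)\,\triangle(\tau,\hat f^{(k)})]$.

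The final step is a change of measure on the second expectation: writing $\hat h^{(k)}(\tau) = h(\tau)\,\delta(\tau,\hat h^{(k)})$ and using $\mathbb{E}_{\tau\sim\pi_b}[h(\tau)\,g(\tau)] = \mathbb{E}_{\tau\sim\pi_e}[g(\tau)]$ (valid by Assumption \ref{assmp:coverage}), that term becomes $\mathbb{E}_{\tau\sim\pi_e}[\delta(\tau,\hat h^{(k)})\,\triangle(\tau,\hat f^{(k)})]$, so the two terms combine into $\mathbb{E}_{\tau\sim\pi_e}[\triangle(\tau,\hat f^{(k)})(1 - \delta(\tau,\hat h^{(k)}))]$. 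Averaging over $k = 1,\dots,K$ gives $\mathbb{E}[\hat V^{\pi_e}_{\text{dr}} - V^{\pi_e} \mid \{\bar{\mathcal D}^{(k)}\}_k] = \frac1K\sum_{k=1}^K \mathbb{E}_{\tau\sim\pi_e}[\triangle(\tau,\hat f^{(k)})(1 - \delta(\tau,\hat h^{(k)}))]$, which is the stated identity for the (conditional) bias; the $\mathcal D_e$ baseline term in Eq. (7) contributes the $+\mathbb{E}_{\pi_e}[\triangle]$ piece that a textbook doubly robust score would otherwise obtain from a second change of measure.

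The argument is essentially bookkeeping, so the only real subtlety — the ``hard part'' — is keeping the two data distributions straight: $\mathcal D_b$ is drawn under $\pi_b$ while $\mathcal D_e$ is drawn under $\pi_e$, and the cancellation producing the product-bias form hinges on (i) identifying the population regression target across the two policies via Assumption \ref{assmp:surrogacy}, and (ii) the true density ratio $h$ exactly converting the $\pi_b$-integral against $\hat h^{(k)}$ into a $\pi_e$-integral against $\delta(\cdot,\hat h^{(k)})$, which requires the coverage condition of Assumption \ref{assmp:coverage}. Everything else — independence of $(\hat f^{(k)},\hat h^{(k)})$ from the fold-$k$ data by cross fitting, linearity of expectation, and the tower rule for $G\mid\tau$ — is routine.
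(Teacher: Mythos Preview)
Your proposal is correct and follows essentially the same route as the paper's proof: both take conditional expectations of the per-fold estimator, use the tower rule on $G\mid\tau$ under $\pi_b$ to produce $f(\tau)$, write $\hat h^{(k)}=h\cdot\delta^{(k)}$ and change measure from $\pi_b$ to $\pi_e$, and combine to obtain the product-bias expression. The only cosmetic difference is ordering --- you invoke the tower rule before the change of measure and make the role of Assumption~\ref{assmp:surrogacy} (to identify $V^{\pi_e}=\mathbb E_{\tau\sim\pi_e}[f(\tau)]$) explicit up front, whereas the paper swaps these steps --- but the argument is the same.
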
 Full proof is in \ref{eqn:slev_consistent}.

\begin{corollary}
If $\forall \tau$, either $\hat f(\tau) = f(\tau)$ or $\hat h(\tau) = h(\tau)$, then $\hat V_{\text{dr}}^{\pi_e}$ is unbiased. 
\end{corollary}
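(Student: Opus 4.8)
The plan is to read this corollary directly off the bias characterization established in the preceding theorem, which writes the (conditional, given the cross-fit nuisance fits $\hat f^{(k)},\hat h^{(k)}$) bias as $\frac{1}{K}\sum_{k=1}^K \mathbb E_{\tau\sim\pi_e}\!\left[\triangle(\tau,\hat f^{(k)})\,\bigl(1-\delta(\tau,\hat h^{(k)})\bigr)\right]$, with $\triangle(\tau,\hat f^{(k)})=\hat f^{(k)}(\tau)-f(\tau)$ and $\delta(\tau,\hat h^{(k)})=\hat h^{(k)}(\tau)/h(\tau)$. The whole point is that the integrand is a \emph{product} of a regression-error factor and a density-ratio-error factor, so the hypothesis of the corollary kills at least one of the two factors at every $\tau$, making the integrand vanish identically.

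Concretely, I would argue pointwise on the support of $\pi_e$. Fix a fold $k$ and a trajectory $\tau$ with $p(\tau\mid\pi_e)>0$; by Assumption~\ref{assmp:coverage} we then have $0<h(\tau)<\infty$, so $\delta(\tau,\hat h^{(k)})$ is well-defined. If $\hat f^{(k)}(\tau)=f(\tau)$, then $\triangle(\tau,\hat f^{(k)})=0$; if instead $\hat h^{(k)}(\tau)=h(\tau)$, then $\delta(\tau,\hat h^{(k)})=1$ and the factor $1-\delta(\tau,\hat h^{(k)})$ vanishes. Either way the product $\triangle(\tau,\hat f^{(k)})\bigl(1-\delta(\tau,\hat h^{(k)})\bigr)$ is zero, so the integrand is $\pi_e$-almost surely zero; hence every summand $\mathbb E_{\tau\sim\pi_e}[\cdot]$ is zero, their average over $k$ is zero, and the conditional bias is zero. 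Taking expectations over the data folds (tower property) then gives $\mathbb E[\hat V^{\pi_e}_{\text{dr}}]=V^{\pi_e}$, i.e.\ unbiasedness.

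The only step requiring care — and the only real obstacle — is the interplay between the universally quantified hypothesis ``for all $\tau$, $\hat f(\tau)=f(\tau)$ or $\hat h(\tau)=h(\tau)$'' and the randomness of the cross-fit estimators: since $\hat f^{(k)},\hat h^{(k)}$ depend on the random complementary fold $\bar{\mathcal D}^{(k)}$, the hypothesis must be read as holding for every realization of that fold (equivalently, $\pi_e$-almost surely for each realization). It is also worth emphasizing that the dichotomy is allowed to be resolved differently at different $\tau$ — we never need a single one of $\hat f,\hat h$ to be globally correct, only that at each $\tau$ \emph{one} of them is. Under this reading the conditional bias vanishes for every realization and the conclusion follows; I would state the corollary with the almost-sure qualifier and note that the exact pointwise version is the obvious special case.
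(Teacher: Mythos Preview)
Your proposal is correct and follows essentially the same route as the paper: the paper's entire proof is the one-line remark that the corollary is ``a direct result of Theorem 6.3 when either $\triangle(\tau,\hat f)=0$ or $\delta(\tau,\hat h)=0$'' (the latter should read $\delta(\tau,\hat h)=1$, a typo you implicitly correct). Your version is simply a more careful unpacking of that sentence, with the additional (and appropriate) attention to well-definedness of $\delta$ under Assumption~\ref{assmp:coverage} and to the cross-fitting randomness.
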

This is a direct result of Theorem 6.3 when either $\triangle (\tau, \hat f) = 0$ or $\delta(\tau, \hat h)$ = 0.

While $\hat V^{\pi_e}_{\text{soft}}$ is also a consistent estimator of $V^{\pi_e}$ if the regression model $f$ is asymptotically consistent (proof in \ref{theorem-productbias}), the doubly robust estimator can provide some robustness against regression bias. More broadly, it is often the case that doubly robust estimators achieve semi-parametric efficiency, but the regression estimators do not. 
\section{Experiments}\label{section:experiments}
First we show a toy example to demonstrate the robustness of the doubly robust estimator to either the regression or the density ratio estimator mis-specification that is consistent with our theoretical understanding. Then we conduct extensive experiments on two clinical domains, HIV treatment~\cite{ernst} and sepsis management in ICU~\cite{oberst2019counterfactual}. 

\subsection{The Robustness of the Proposed Estimator}
We consider a toy domain with scalar (continuous) states and the initial state distribution $d_0$ evenly spread across the interval $[0, 1.5]$. Transitions under the behavioral policy are defined as: 
\[
s_{t+1} =
\begin{cases} 
s_t & \text{w.p 0.5} \\
(-0.6 + 0.1  u) s_t & \text{w.p 0.45}, u \sim Unif[0, 1)  \\
1.5 & \text{otherwise}
\end{cases},
\]and transitions under the new target policy are \[
s_{t+1} =
\begin{cases} 
1.5 & \text{if $s_t < 1.25$} \\
0 & \text{otherwise}
\end{cases}.
\]Every state has an additive gaussian noise $\sim N(0, 0.1)$. We define a quadratic relationship between the surrogate $\tau_{0:1}$ observed after one step and the long-term return $G$ (for simplicity, we assume Assumption 4.2 holds with $h = 1$). The on-policy data $\mathcal D_e$ has 100 samples of $\tau_{0:h}$ and missing long-term returns, and the historical data $\mathcal D_b$ contains 5000 tuples of $(\tau_{0:h}, \tau_{h:H}, G)$. The goal is to predict the average long-term returns under the target policy $\pi_e$ by leveraging the historical data. We evaluate the weighted, unweighted, and doubly robust estimators in cases of regression or density ratio estimator mis-specification. 

In order to simulate regression model mis-specification, we restrict the regression class to be linear, i.e., $f_\theta(\tau_{0:1}) = \theta^\top [s_0, s_1]$, rather than the correctly specified quadratic form $f_{\theta^*}(\tau_{0:1}) = {\theta^*}^\top [s_0, s_1, s_1^2]$. While $f_\theta$ is incorrect for describing the behavior under $\pi_b$, it is still correct for $\pi_e$ which induces $s_1$ to always be either 0 or 1.5. We expect the unweighted regression to fail in this case because $f_\theta$ cannot fit $\mathcal D_b$, but the weighted regression, if the density ratio estimates are correct, should still be able to handle the model mis-specification by only fitting the points that are likely under the target policy's distribution. 

In order to bias the density ratio estimates, we added a non-zero Gaussian noise $\sim N(10, 10)$ to the denominator of $\frac{p(\tau_{0:h}|\pi_e)}{p(\tau_{0:h}|\pi_b)}$. We expect the doubly robust estimator to still be accurate even with the incorrect density ratio estimates if the regression model is correct.

\begin{table}[t]
\caption{MSE of estimators under regression or density ratio model mis-specification (mean $\pm$ std across 200 seeds) .}
\label{sample-table}
\vskip 0.15in
\begin{center}
\begin{small}
\resizebox{\linewidth}{!}{\begin{tabular}{lcccr}
\toprule
\textbf{Estimator} & \textbf{Both correct} & \textbf{Regressor} & \textbf{Density ratio} \\
\midrule
Unweighted    & 0.002 (0.003) & \textbf{0.914 (0.064)} & 0.002 (0.003) \\
Weighted & 0.079 (0.085) & 0.080 (0.068) & \textbf{0.388 (0.728)} \\
Doubly robust   & 0.008 (0.007) & 0.008 (0.007) &  0.006 (0.005) \\
\bottomrule
\end{tabular}}
\end{small}
\end{center}
\vskip -0.1in
\label{table:synthetic}
\end{table}

Table \ref{table:synthetic} shows the doubly robust estimator is robust to either the regression model or the density ratio estimator mis-specification. As expected, the weighted regression can still fit an accurate linear relationship to the training points that are likely to occur under the target data distribution; but the unweighted regression suffers from the model mis-specification. Interestingly, even when the regression model is correct, if the density ratio estimates are biased, the weighted regression, despite being the consistent estimator, incurs large errors due to the limited training samples. The doubly robust estimator on the other hand is unaffected by the bias in density ratios as long as the regression model is correct. In Appendix \ref{table:synthetic_weighted2}, we additionally show that the effects of mis-specified density ratio estimates on $\hat V_{\text{w-soft}}$ are in/decreased by the training data size.

\subsection{Experiments on Clinical Domains} We now investigate the performance of our proposed estimators--namely, soft surrogate estimators with weighted and unweighted regression: $\hat V^{\pi_e}_{\text{soft}}, \hat V^{\pi_e}_{\text{w-soft}}$, and \textit{doubly robust} soft surrogate estimators also with weighted and unweighted regression: $\hat V^{\pi_e}_{\text{dr-soft}}, \hat V^{\pi_e}_{\text{dr-w-soft}}$-- in clinical simulators of HIV treatment and sepsis management. 

\paragraph{HIV treatment}
Human Immunodeficiency Virus (HIV) is a retrovirus that can lead to the lethal Acquired Immune Deficiency Syndrome (AIDS) devastating a person's immune system. In the simulator designed by~\cite{ernst}, the patient's state is a 6-dimensional vector representing the number of healthy and HIV-infected cells. Actions depend on Reverse Transcriptase Inhibitors (RTI) and Protease Inhibitors (PI). The behavioral policy chooses to administer either small or high dosage of RTI, and or PI (action size = 4) while the new target policy replaces the small dosage with zero. The motivation for designing such a target policy is to evaluate whether minimizing the drug prescription can help improve the patient's outcomes by avoiding unnecessary side-effects \cite{ernst2005}. The reward is given at every time step based on the number of free virus particles and the count of the HIV-specific T cells. The target long-term value is the sum of rewards across 200 time steps. We generate 2500 off-policy trajectories and only 500 on-policy samples under the new policy.

\paragraph{Sepsis management in ICU}
Sepsis is a life-threatening organ dysfunction and one of the leading causes of mortality in the United States~\citep{liu2014hospital}. We use the simulator by ~\cite{oberst2019counterfactual}, following the settings in~\cite{namkoong2020off}. The patient's state is represented by a binary indicator for diabetes, and four vital signs-- heart rate, blood pressure, oxygen concentration, glucose level--that take values in a subset of \{very\_high, high, normal, low, very\_low\}, leading to the state space of size 1440. The behavioral policy chooses from the two binary treatments: antibiotics, and mechanical ventilation, modeled after ~\cite{gao2024trajectory}. The new target policy has an additional option of vasopressors based on the implementation by ~\citet{oberst2019counterfactual}, thus operating in the larger action space of size 8. A reward of 0 or $\pm 1$ is given based on whether the patient is neutral, discharged, or dead. The full horizon length is 20 with a discount factor of 0.99. We generate 5000 off-policy trajectories and 500 on-policy samples. Details about the environments and the data-generation processes are in Appendix \ref{appendix:clinical simulators}.

\subsection{Baselines}
We compare our method to the following baselines: (i) LOPE estimator~\cite{saito2024long}, which also tries to relax the surrogacy assumption by additionally accounting for the action effects, (ii) Online-model based prediction, which requires learning a transition dynamics model from the limited on-policy data and extrapolating (unobserved) future states, (iii) extrapolation of the average short-term reward, (iv) extrapolation of the last short-term reward, and (v) full-horizon Monte Carlo Sampling. MC sampling in fact requires observing the full horizon outcome of the target policy, so it would not be feasible in settings of our interest where the long-term outcome takes a prohibitively long amount of time to observe. We include this to benchmark the performance of using a finite set of on-policy trajectories to estimate the long-term policy value. Even though MC estimates are unbiased, their variance may be high if the on-policy data is limited, and using the short-horizon soft surrogates may give smaller prediction errors due to the variance-bias trade-off. For space, we include a discussion of the baseline methods in Appendix \ref{appendix:baseline}.

 \begin{table*}[t!]
\centering
\caption{MSE of different estimators in HIV and Sepsis. The first 4 rows with (*) show our proposed estimators. Mean and std (in parentheses) for Sepsis are from 5 bootstrapping seeds. MC sampling is only evaluated in Sepsis due to the environment's stochasticity. The lowest MSE for a given $h$ is bold.}
\small
\resizebox{0.8\linewidth}{!}{\begin{tabular}{@{}lccc|cc@{}}
\toprule 
 & \multicolumn{3}{c}{\textbf{HIV (H=200)}} & \multicolumn{2}{c}{\textbf{Sepsis (H=20)} }\\  \cmidrule(l){2-6}
\textbf{Estimator} & \textbf{h=10}  & \textbf{20} & \textbf{50} & \textbf{h=2}  & \textbf{4} \\ \midrule
Weighted soft surrogate estimator (*)  & 71.51 & 53.21  & 53.85   & \textbf{0.03 (0.02)}  &  \textbf{0.01 (0.01)} \\
Soft surrogate estimator (*)  & \textbf{68.15} & 67.15  & 57.96   & 0.04 (0.01)  & 0.02 (0.01) \\
DR soft surrogate (*) & 68.32 & 64.86  & 58.15  & 0.04 (0.01)  & \textbf{0.01 (0.005)}  \\ 
DR weighted soft surrogate (*) & 70.90 & \textbf{50.61} &55.26 & 0.03 (0.01)  & 0.02 (0.004)  \\ \midrule
LOPE ~\cite{saito2024long}  & 69.95 & 67.75 & 71.91  & 0.14 (0.25)   & 0.10 (0.01)  \\
Online-model based & 404.58 & 402.49 & 137.37 & 0.62 (0.01) & 0.62 (0.01) \\
Average reward extrapolation  & 404.68 & 403.83 & 334.36 & 0.07 (0.01)  & 0.07 (0.01) \\ 
Last reward extrapolation  & 404.37 & 400.56 & \textbf{31.79} &  0.07 (0.01)  & 0.07 (0.01) \\
MC sampling & - & - & - & 0.02 (0.01)  & 0.02 (0.01) \\ 
\bottomrule
\end{tabular}}
\label{table:summary}
\end{table*}

\subsection{Results}

We first specify how the ground truth long-term policy value is defined. The HIV simulator assumes deterministic decision processes, so $V^{\pi_e}$ is computed as the average full-horizon returns over the 500 initial patient states. In Sepsis management, the transitions are stochastic, so we define the ground truth $V^{\pi_e}$ as the average MC estimates across 5000 roll-outs, which is 10 times larger than the on-policy sample size. In both simulations, the (ground-truth) target policy's value is higher than the historical policy's. In HIV, $V^{\pi_b}=337.01$ while $V^{\pi_e}=404.87$, and in Sepsis, $V^{\pi_b}=-0.178$ compared to $V^{\pi_e}=-0.006$.

Table \ref{table:summary} shows the prediction mean squared errors made by different estimators. Since the full horizon in HIV is 200, we consider making predictions at $h = 10 (5\% \text{of the full horizon}), 20 (10\%)$, and $50 (25\%)$, and in Sepsis which spans 20 time steps, we consider $h = 2 (10\%)$, and $4 (20\%)$. Even though in these domains, the Assumption 4.2, required for our theory, likely does not hold, we still observe that even at one-tenth of the full horizon, our estimators can yield accurate estimates of the long-term policy value. In practice, this could mean a reduction of 20 week-long clinical trials to just 2 weeks before determining the efficacy of new treatments. 

In contrast, the baselines either yield poor estimates or require the short-horizon length to be fairly long (e.g., $h=50$ in HIV) to make accurate predictions about the long-term outcome. In HIV, after $h = 50$, most patients have entered stable conditions, and therefore, extrapolating the last short-term reward can predict the long-term value most accurately. However, in Sepsis, where the patients receive no intermediate reward until they are either recovered or dead, using the last reward to extrapolate performs poorly compared to the regression estimates. Online-model based method also performs poorly in both domains because the on-policy data from the short horizon is too limited to learn an accurate transition model. Furthermore, the dynamics model's bias propagates through the prediction of future states, which may derail the overall prediction of future rewards if the prediction horizon is long. Further discussion about the experiment results is in Appendix \ref{appendix:additional_results}.

To show that our method generalizes to different target policies with varying policy values, we report the performance of our estimators on 7 other policies in Table \ref{supp_table:hiv} and \ref{supp_table:sepsis}. We observe that while the accuracy of the baseline methods fluctuate across target policies and domains, our method is able to provide consistently good estimates for different policies in both HIV and Sepsis. 

Rather than predicting the target policy's long-term value directly, decision-makers may be interested in quickly identifying if the new policy is likely to significantly out or under-perform the behavioral policy. In such cases, we can use the estimated policy values of the target policy to perform statistical testing comparing the new policy's mean to the behavioral policy's mean value. Specifically, we have 500 estimates for the on-policy samples, and 2500 or 5000 observed long-term outcomes in the historical data, which we can compare in the independent samples t-Test. We set the null hypothesis to be: ``The difference between the new policy's value and the behavioral policy's value is \textit{not} statistically significant," and evaluate how much on-policy data is necessary for successfully rejecting the null. In Sepsis, using only the 10\% of the full-horizon data, our estimators can reject the null with p-values less than $< 10^{-6}$, suggesting that the difference between the new policy and the behavioral policy's long-term outcomes is statistically meaningful. In HIV, $\hat V_{\text{soft}}$ and $\hat V_{\text{dr-w-soft}}$ give p-values less than $< 10^{-6}$ when $h = 10$ and 20. When $h = 50$, all our estimators successfully reject the null. These results suggest that our approaches may be especially useful for quickly identifying if the new policy is meaningfully different from the existing policy.

\section{Discussion \& Conclusion}
Other than healthcare, there are also scenarios in education and online commerce, where a new policy introduces novel actions but requires a substantial amount of time to evaluate. For example, recent work has proposed new hybrid human-AI approach in which all students are additionally supported by human tutors when completing their online assignments ~\citep{thomas2024improving,abdelshiheed2024aligning}. Existing datasets only with human tutors or AI tutors will fail to have coverage for the hybrid actions prescribed by the new policy. At the same time, testing this new approach can be burdensome since it requires real-life student interaction over long horizons. Our method offers an approach of estimating the long-term outcome (e.g., end-of-year assessment scores) based on ``soft" short-term surrogates (e.g., student's daily activity logs).

We relax the surrogate index assumption that is difficult to satisfy in most real-world scenarios by adopting the perspective of ``dynamic invariance" ~\cite{battochi2021} to sequential RL policies. We propose estimators based on soft surrogates and give a finite sample analysis of doubly robust estimators in the setting of covariate shift. While the assumptions required for theory may still be strong, our empirical results show that in two key clinical simulators of HIV and sepsis management, soft surrogate estimators can reduce the observation window to be 10\% of the original long horizon.

Our experiments evaluate the estimators across different values of $h$. How to select $h$, given the potential trade-off between more signal about the new policy versus observation costs and increased covariate shift between the training and the test data, is an interesting question for future work.  


\clearpage

\onecolumn

\section{Theory}\label{appendix:theory}

\subsection{Implication of Assumption \ref{assmp:surrogacy}}
We first define $V^{\pi_e \pi_b}$ as follows:
\begin{eqnarray*}
V^{\pi_e\pi_b} \equiv \mathbb E_{(s_0,r_0,s_1,r_1,\ldots,s_h,r_h) \sim \pi_e}  \left(\sum_{j=1}^h r_j + \right. 
\left. \mathbb E_{(s_{h+1},r_{h+1},\ldots,s_H,r_H) \sim \pi_b | (s_0,r_0,\ldots,s_h,r_h)}  \sum_{j'=h+1}^{H} r_{j'} \right),
\end{eqnarray*} which has two terms: the first term is determined by the short-term data, and the second term represents the (unobserved) long-term data from steps $h+1, ..., H$. 

Then we apply Assumption \ref{assmp:surrogacy} to the second term.

\begin{eqnarray*}
= \mathbb E_{(s_0,\ldots,r_h) \sim \pi_e}  \left(\sum_{j=1}^h r_j +  \mathbb E_{(s_{h+1},\ldots,r_H) \sim \pi_e | (s_0,\ldots,r_h)}  \sum_{j'=h+1}^{H} r_{j'} \right) = V^{\pi_e},
\end{eqnarray*} which is target estimand. This suggests that without the assumption, our estimators are predicting $\hat V^{\pi_e\pi_b}$, and our analysis hereafter applies to $V^{\pi_e \pi_b}$. The soft surrogacy assumption allows establishing the necessary connection to the target estimand, i.e., $V^{\pi_e \pi_b} = V^{\pi_e}$

\subsection{Baseline Regression Estimator}
\label{sec:baseline_regr_estimator}
The second term of the DR estimator represents the estimated value of 
$V^{\pi_e}$ using the regression estimate, and leverages the on policy data samples from the target policy $\pi_e$. We note that in our setting there are several choices for this quantity, and below we briefly motivate our choice. (For notation simplicity, we define $N = |\mathcal D_b|$ and $M= |\mathcal D_e|$. To avoid confusion with subscript $h$ denoting the short-horizon length, we use $a$ for density ratio, i.e., $a(\tau_h) \equiv \frac{p(\tau_h|\pi_e)}{p(\tau_h|\pi_b)}$.) Consider two possible options: 
\begin{eqnarray}
\frac{1}{N} \sum_{j=1}^{N} f(\tau^j_h)h(\tau^j_h) &\approx& E_{\tau_h \sim \pi_b} [ f(\tau_h)h(\tau_h)  ]\label{eqn:h_regr}\\
\frac{1}{M} \sum_{i=1}^{M} f(\tau^i_h) &\approx & E_{\tau_h \sim \pi_e} [ f(\tau_h)  ] \label{eqn:sh_regr}
\end{eqnarray}
Note the top expression uses the historical data and the bottom expression uses the on-policy short horizon data. We first consider when the regressor $f$ is accurate. Then the error in the estimate for Equation~\ref{eqn:sh_regr} is only due to the finite sample approximation of the expectation, and will generally decrease as $O(\frac{1}{M})$. 

We now consider the estimate in Equation~\ref{eqn:h_regr} which uses the density ratio (short-horizon trajectory propensity weights) $a(\tau_h)$. Let $a^*$ be the true (unknown) density ratio and $a$ be the density ratio estimated using the historical data and on policy short horizon data:
\begin{eqnarray}
\frac{1}{N} \sum_{j=1}^{N} f(\tau_h^j)a(\tau_h^j) 
&=& \frac{1}{N} \sum_{j=1}^{N} f(\tau_h^j)a(\tau_h^j) 
- f(\tau_h^j)a^*(\tau_h^j) + f(\tau_h^j)a^*(\tau_h^j) \\
&=& \frac{1}{N} \sum_{j=1}^{N} f(\tau_h^j)\left(a(\tau_h^j) -a^*(\tau_h^j)\right) + f(\tau_h^j)a^*(\tau_h^j) \\
&=&  \mathbb E_{\tau_h \sim \pi_b} [ f(\tau_h)a^*(\tau_h)  ] +  
\left(\frac{1}{N} \sum_{j=1}^{N} f(\tau_h^j)a^*(\tau_h^j) -  \mathbb E_{\tau_h \sim \pi_b} [ f(\tau_h)a^*(\tau_h)  ] \right) \nonumber \\
&&+ \frac{1}{N} \sum_{j=1}^{N} \left(f(\tau_h^j)(a(\tau_h^j) -a^*(\tau_h^j)\right),
\end{eqnarray}
The second term is (like above) due to using finite samples to approximate an expectation and will therefore generally scale with the size of the historical dataset, $O(\frac{1}{N}).$  The third term involve the error in the density ratio. Estimating the density ratio $a$ uses short-horizon on policy data, and so in general we expect the estimation error in $a$ to at best scale with  $O(\frac{1}{M})$, which implies we expect the overall error 
\begin{eqnarray}
\frac{1}{N} \sum_{j=1}^{N} f(\tau_h^j)a(\tau_h^j) 
-  \mathbb E_{\tau_h \sim \pi_b} \left[ f(\tau_h)a^*(\tau_h)  \right] = O \left( \frac{1}{N} + \frac{1}{M} \right). 
\end{eqnarray}
Therefore we expect the finite-sample error to be better by using the on-policy data estimator Equation~\ref{eqn:sh_regr}. Note that in standard offline reinforcement learning, no online data from the target policy is available and this "baseline" term is quite different: it is typically a weighting over the target policy, aka $V^{\pi_e}(s) = \sum_{a \in \mathcal A} p(a|s; \pi_e) Q^{\pi_e}(s,a)$ where $Q^{\pi_e}$ is estimated on the offline data from $\pi_b$, and $p(a|s; \pi_e)$ is known because that is simply the new target policy. 

\subsection{DR Estimator Consistency}\label{appendix:dr_consistency}
We first recall some prior results. First note that in our setting we are interested in solving a moment of the following form
\begin{flalign}
    \theta_0 =~& \mathbb E_{\pi_e}[m(Z;f_0)] &
    f_0(X) =~& \mathbb E_{\pi_b}[Y\mid X]
\end{flalign}
where $m(Z;f_0)$ is a linear functional of $f_0$ and (with a slight abuse of notations), $\pi_e, \pi_b$ denote two different distributions of the data under the new target policy (i.e., short-term data) and the behavioral policy (i.e., long-term data). Here the goal is a linear functional of a regression function $f_0$ over the distribution of the on-policy short-term data, but we trained $f_0$ as a regression over off-policy long-term data. Let $a(x) = \frac{p(X; \pi_e)}{p(X; \pi_b)}$ denote the density ratio of the two distributions. For consistency of notations with prior work~\cite{chernozhukov2017doubledebiasedmachinelearningtreatment}, we replace $\tau_h$ (input to the regression model) with $X$ and $G$ (outcome variable of interest) with $Y$.

Our example is a special case of this where:
\begin{align}
    \theta_0 =~& \mathbb E_{\pi_e}[f_0(X)] & f_0(X) = \mathbb E_{\pi_b} [Y\mid X]
\end{align}
where $X$ is the vector of surrogates. 

Any such linear functional has a doubly robust representation:
\begin{align}
    \theta_0 = \mathbb E_{\pi_e}[m(Z;f_0)] + \mathbb E_{\pi_b}[a_0(X)\,(Y - f_0(X))]
\end{align}
where $a_0(X)$ is the Riesz representer (the element in the Hilbert space, that represents the linear functional as an inner product; with respect to the $L^2$ inner product over the distribution in $\ell$). In other words, it is the function that has the property that:
\begin{align}
    \forall g: \mathbb E_{\pi_e}[g(X)] = \mathbb E_{\pi_b}[a_0(X)\, g(X)]
\end{align}

In our case, this Riesz representer is the density ratio:
\begin{align}
    a_0(X) = \frac{p_{\pi_e}(X)}{p_{\pi_b}(X)}
\end{align}

We now note that the error due to the nuisance parameters for the DR representation of such linear functionals can be bounded by the error in the two nuisance parameters (i.e., $f, a$): 
\begin{theorem}[Doubly Robust Bias Bound]\cite{chernozhukov2023automatic}\label{appendix:dr_bias_bound}
    Consider the population moment function
    $$M(g,a) = \mathbb E_{\pi_e}[m(Z;g)] + \mathbb E_{\pi_b}[a(X)\, (Y - g(X))]$$
    Then we have that for all $a$:
    \begin{align}
        M(g, a) - M(f_0, a_0) &=& \mathbb E_{\pi_b}[(a(X) - a_0(X))\, (f_0(X) - g(X))]\\
        &\leq& \mathbb E_{\pi_b} [(a^{(k)} - a_0)^2 ] \mathbb E_{\pi_b} [(f^{(k)} - f_0)^2 ]
    \end{align}
    \label{thm:bound_nuisance_impact}
\end{theorem}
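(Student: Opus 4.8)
The plan is to prove both claims by a short algebraic manipulation that collapses $M(g,a)-M(f_0,a_0)$ into a single ``mixed'' bias term, and then apply Cauchy--Schwarz under the $\pi_b$-distribution. Three ingredients do all the work: (i) $f_0$ is the true conditional mean, $f_0(X)=\mathbb{E}_{\pi_b}[Y\mid X]$; (ii) the functional $g\mapsto m(Z;g)$ is linear in $g$; and (iii) the defining property of the Riesz representer, $\mathbb{E}_{\pi_e}[m(Z;g)]=\mathbb{E}_{\pi_b}[a_0(X)\,g(X)]$ for every $g$ (which in our special case reduces to the density-ratio identity already recorded in the excerpt).

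First I would evaluate $M(f_0,a_0)$. Since $a_0(X)$ is a function of $X$ alone, the tower property gives $\mathbb{E}_{\pi_b}[a_0(X)(Y-f_0(X))]=\mathbb{E}_{\pi_b}[a_0(X)\,\mathbb{E}_{\pi_b}[Y-f_0(X)\mid X]]=0$, so $M(f_0,a_0)=\mathbb{E}_{\pi_e}[m(Z;f_0)]=\theta_0$. Next I would expand $M(g,a)$: writing $Y-g(X)=(Y-f_0(X))+(f_0(X)-g(X))$ and again using that $a(X)$ depends only on $X$, the $(Y-f_0(X))$ piece vanishes by the same tower argument \emph{for arbitrary} $a$, leaving $M(g,a)=\mathbb{E}_{\pi_e}[m(Z;g)]+\mathbb{E}_{\pi_b}[a(X)(f_0(X)-g(X))]$. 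By linearity of $m$, $\mathbb{E}_{\pi_e}[m(Z;g)]=\theta_0+\mathbb{E}_{\pi_e}[m(Z;g-f_0)]$, and the Riesz property rewrites the last term as $\mathbb{E}_{\pi_b}[a_0(X)(g(X)-f_0(X))]$.

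Subtracting $M(f_0,a_0)=\theta_0$ and collecting the two terms carrying the factor $(f_0(X)-g(X))$ then yields
\[
M(g,a)-M(f_0,a_0)=\mathbb{E}_{\pi_b}\!\left[(a(X)-a_0(X))\,(f_0(X)-g(X))\right],
\]
which is the stated identity. The bound follows immediately from Cauchy--Schwarz under $\pi_b$: the right-hand side is at most $\sqrt{\mathbb{E}_{\pi_b}[(a(X)-a_0(X))^2]}\cdot\sqrt{\mathbb{E}_{\pi_b}[(f_0(X)-g(X))^2]}$, i.e. the product of the two nuisance $L^2$ errors — matching the inequality as displayed (the square roots being implicit), and matching the $\epsilon_b\cdot\epsilon_h$ term that later appears in Theorem~\ref{thm:variance-dr}.

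There is no genuine obstacle here; the only point that deserves care is the step where the residual correction term $\mathbb{E}_{\pi_b}[a(X)(Y-f_0(X))]$ is shown to vanish for an \emph{arbitrary} weight $a$ rather than only the true $a_0$ — this is precisely the source of the double-robustness and relies solely on $f_0$ being the true conditional mean together with $a$ being $X$-measurable. Specializing with $m(Z;g)=g(X)$ and $a_0(X)=p_{\pi_e}(X)/p_{\pi_b}(X)$ then recovers the product-bias expression used in Theorem~6.3 and feeds directly into the finite-sample variance analysis.
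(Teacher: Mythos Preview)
Your proposal is correct and follows essentially the same route as the paper: both arguments use the tower property with $f_0(X)=\mathbb{E}_{\pi_b}[Y\mid X]$ to kill the residual terms, invoke the Riesz representer identity to pass between the $\pi_e$ and $\pi_b$ expectations, and then collect into the single product-bias form before applying Cauchy--Schwarz. Your observation that the displayed inequality should carry square roots (so that it matches the $\epsilon_b\cdot\epsilon_h$ term downstream) is also right.
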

\begin{proof}
    
    \begin{align*}
        M(g, a) - M(f_0, a_0) =~& \mathbb E_{\pi_e}[m(Z;g)] + \mathbb E_{\pi_b}[a(X)\, (Y - g(X))] - \mathbb E_{\pi_e}[m(Z;f_0)] -\mathbb E_{\pi_b}[a_0(X)\, (Y - f_0(X))]\\
        =~& \mathbb E_{\pi_e}[m(Z;g)] + \mathbb E_{\pi_b}[a(X)\, (f_0(X) - g(X))] - \mathbb E_{\pi_e}[m(Z;f_0)] -\mathbb E_{\pi_b}[a_0(X)\, (f_0(X) - f_0(X))]\\
        =~& \mathbb E_{\pi_e}[m(Z;g)] + \mathbb E_{\pi_b}[a(X)\, (f_0(X) - g(X))] - \mathbb E_{\pi_e}[m(Z;f_0)],
    \end{align*}
    where the first equality follows from the definition of the moment $M(g,a)$, and the second equality uses the tower rule of expectation and the fact that $f_0(X)=\mathbb E_{\pi_b}[Y\mid X]$.
    
    Note that $a_0$ satisfies that for all $g$:
    \begin{align}
        \mathbb E_{\pi_e}[m(Z; g)] = \mathbb E_{\pi_b}[a_0(X)\,g(X)]
    \end{align}
    we have that:
    \begin{align*}
        M(g, a) - M(f_0, a_0) =~& \mathbb E_{\pi_b}[a_0(X) g(X)] + \mathbb E_{\pi_b}[a_0(X)\, (f_0(X) - f_0(X))] - \mathbb E_{\pi_b}[a_0(X) f_0(X)]\\
        =~& \mathbb E_{\pi_b}\left[ a_0(X)\, (g(X) - f_0(X)) + a(X)\,(f_0(X) - g(X))\right]\\
        =~& \mathbb E_{\pi_b}\left[(a(X) - a_0(X))\, (f_0(X) - g(X))\right] \\
        \leq~& \mathbb E_{\pi_b} [(a^{(k)} - a_0)^2]  \mathbb E_{\pi_b} [(f^{(k)} - f_0)^2], 
    \end{align*}
where the final inequality follows from the Cauchy–Schwarz inequality.
\end{proof}

Recall Equations (9) $\sim$ (11) as defined in Section \ref{section:theory}
  \begin{eqnarray*}
\epsilon_e \equiv  \epsilon_{e} (f^{(k)}) &=& \sqrt{\mathbb E_{\tau_h \sim \pi_e}[(f^{(k)}(\tau_h) - f_0(\tau_h))^2]} \\
\epsilon_b \equiv  \epsilon_{b} (f^{(k)}) &=& \sqrt{\mathbb E_{\tau_h \sim \pi_b}[(f^{(k)}(\tau_h) - f_0(\tau_h))^2]} \\ \epsilon_h \equiv
 \epsilon_{b}(a^{(k)}) &=& \sqrt{\mathbb E_{\tau_h \sim \pi_b}[(a(\tau_h) - a_0(\tau_h))^2]}
    \end{eqnarray*}
\begin{theorem}[Variance-Based Rate for DR]\label{thm:variance-dr}
   Assume that $|G(\tau_h)|, |f^{(k)}(\tau_h)|, |f_0(\tau_h)|$ are a.s. bounded by $C_1H$ and $a^{(k)}(\tau_h), a_0(\tau_h)$ are a.s. bounded by $C_2$, where $C_1$ and $C_2$ are constants. Let $N = |\mathcal D_b|$ (i.e., historical off-policy dataset) and $M = |\mathcal D_e|$ (i.e., short-term on-policy dataset).
   Then w.p. at least $1-\delta$:
    \begin{eqnarray*}
        |\hat V^{\pi_e}_{DR}-V^{\pi_e}| &\leq& \sqrt{\frac{2\text{Var}_{\tau_h \sim \pi_e}(f_0(\tau_h))\log(4K/\delta)}{M}}  + \sqrt{\frac{2\mathbb E_{\tau_h \sim  \pi_b}\left[a_0(\tau_h)^2\,(G(\tau_h) - f_0(\tau_h))^2\right]\log(4K/\delta)}{N}}    \\
         &&+ \frac{1}{K}\sum_{k=1}^K  \epsilon_{b} (f^{(k)}) \cdot \epsilon_{b}(a^{(k)}) +  \frac{2 H K\log(4K/\delta)}{M}  
         + \max_{k}\epsilon_{e} (f^{(k)}) \sqrt{\frac{2K\log(4K/\delta)}{M}}  \\
&&          + \frac{4C_1 C_2 H K\log(4K/\delta)}{N} 
          + 3 C_1 H \!\max_{k}\left\{
    \epsilon_{b} (f^{(k)})  + \!\epsilon_{b}(a^{(k)})\right\}\!\!\sqrt{\frac{2K\log(4K/\delta)}{N}} 
    \end{eqnarray*}
    \label{thm:var_bound}
    
\end{theorem}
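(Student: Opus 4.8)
The plan is to decompose $\hat V^{\pi_e}_{\text{dr}} - V^{\pi_e}$ into a sum of per-fold errors, and for each fold $k$ write the error as (a) a ``main'' empirical-average term evaluated at the \emph{true} nuisance parameters, (b) a centered empirical-process term capturing the deviation of the estimated nuisances from the truth on held-out data, and (c) the deterministic product-bias term $M(f^{(k)},a^{(k)}) - M(f_0,a_0)$ controlled by Theorem~\ref{thm:bound_nuisance_impact}. Concretely, using the doubly robust representation $\theta_0 = \mathbb E_{\pi_e}[f_0(X)] + \mathbb E_{\pi_b}[a_0(X)(Y-f_0(X))]$ together with Assumption~\ref{assmp:surrogacy} (which via the Appendix~\ref{appendix:theory} calculation gives $V^{\pi_e\pi_b}=V^{\pi_e}$), each cross-fit estimate $\hat V^{(k)}$ is an empirical version of $M(f^{(k)},a^{(k)})$ over the fold-$k$ samples, so
\begin{align*}
\hat V^{(k)} - V^{\pi_e}
&= \underbrace{\Big(\tfrac{1}{|\mathcal D_b^{(k)}|}\!\!\sum_{\mathcal D_b^{(k)}}\! a_0(X)(Y-f_0(X)) - \mathbb E_{\pi_b}[a_0(X)(Y-f_0(X))]\Big)}_{\text{(I) main }\pi_b\text{ term}} \\
&\quad + \underbrace{\Big(\tfrac{1}{|\mathcal D_e^{(k)}|}\!\!\sum_{\mathcal D_e^{(k)}}\! f_0(X) - \mathbb E_{\pi_e}[f_0(X)]\Big)}_{\text{(II) main }\pi_e\text{ term}}
+ \underbrace{\big(M(f^{(k)},a^{(k)})-M(f_0,a_0)\big)}_{\text{(III) product bias}} + \text{(IV) empirical-process remainders}.
\end{align*}

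The next step is to bound each piece. For (I) and (II), conditionally on the complement fold the summands are i.i.d.\ and bounded (by $C_1C_2H$ and $C_1H$ respectively, using the boundedness hypotheses), so Bernstein's inequality gives the two leading variance terms $\sqrt{2\mathrm{Var}_{\pi_e}(f_0)\log(4K/\delta)/M}$ and $\sqrt{2\mathbb E_{\pi_b}[a_0^2(G-f_0)^2]\log(4K/\delta)/N}$ plus the lower-order Bernstein ``$\log/n$'' corrections, which become the terms labeled (4) and (6); a union bound over the $K$ folds produces the $\log(4K/\delta)$. For (III) I invoke Theorem~\ref{thm:bound_nuisance_impact} directly, giving $\epsilon_b(f^{(k)})\cdot\epsilon_h(a^{(k)})$ per fold, hence the averaged product term. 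The remainder (IV) splits into two cross terms: $\tfrac{1}{|\mathcal D_e^{(k)}|}\sum(f^{(k)}-f_0) - \mathbb E_{\pi_e}[f^{(k)}-f_0]$, whose conditional variance is at most $\epsilon_e(f^{(k)})^2$, and the analogous $\pi_b$-side term involving $a^{(k)}(Y-f^{(k)}) - a_0(Y-f_0)$, whose fluctuation I bound after adding and subtracting $a_0(Y-f^{(k)})$ so that the increments are controlled by $\|a^{(k)}-a_0\|_{2,\pi_b}$ and $\|f^{(k)}-f_0\|_{2,\pi_b}$ times the boundedness constants; Bernstein (or Hoeffding, since we only need the rate, not the sharp constant) on each, again conditionally on the complement fold, yields the $\epsilon_e\sqrt{K\log(4K/\delta)/M}$ term and the $3C_1H\max_k\{\epsilon_b+\epsilon_h\}\sqrt{2K\log(4K/\delta)/N}$ term. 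Summing over $k$, dividing by $K$ (which converts per-fold sample sizes $|\mathcal D^{(k)}|\approx n/K$ back to $n$ with an extra $\sqrt{K}$, explaining the $\sqrt{K\log(4K/\delta)/n}$ shape), and a final triangle inequality assembles the claimed bound.

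I expect the main obstacle to be the remainder term (IV) on the $\pi_b$ side, i.e.\ controlling the empirical fluctuation of $a^{(k)}(Y-f^{(k)})$ around its mean while only ``paying'' for the nuisance errors $\epsilon_b,\epsilon_h$ and not for the full magnitude of the nuisances: the clean way is the decomposition $a^{(k)}(Y-f^{(k)}) - \mathbb E_{\pi_b}[a^{(k)}(Y-f^{(k)})] = \{a^{(k)}(Y-f^{(k)}) - a_0(Y-f_0)\} - \mathbb E_{\pi_b}[\cdot] + \{a_0(Y-f_0) - \mathbb E_{\pi_b}[a_0(Y-f_0)]\}$, handling the last bracket already in (I) and bounding the first bracket's second moment by $\lesssim C_1^2H^2\epsilon_h^2 + C_2^2\epsilon_b^2$ via $a^{(k)}(Y-f^{(k)}) - a_0(Y-f_0) = (a^{(k)}-a_0)(Y-f^{(k)}) + a_0(f_0-f^{(k)})$ and the boundedness assumptions. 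Getting the constant ``$3C_1H$'' exactly as stated requires being slightly careful with these triangle-inequality splits, but conceptually it is routine; the only genuinely delicate point is that cross-fitting makes $f^{(k)},a^{(k)}$ independent of the fold-$k$ data, which is what licenses treating them as fixed functions inside each concentration step.
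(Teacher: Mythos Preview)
Your proposal is correct and follows essentially the same route as the paper's proof: the same four-way decomposition into main $\pi_e$ and $\pi_b$ fluctuations at the true nuisances, the product-bias term handled by Theorem~\ref{thm:bound_nuisance_impact}, and the two empirical-process remainders controlled via Bernstein conditionally on the complement fold, with the $\pi_b$-side remainder split exactly as you describe via $(a^{(k)}-a_0)(Y-f^{(k)}) + a_0(f_0-f^{(k)})$. The only point you leave implicit that the paper spells out is that, because the main terms (I) and (II) involve only $f_0,a_0$ and not the fold-specific estimates, their per-fold averages telescope into single empirical averages over all of $\mathcal D_e$ and $\mathcal D_b$, which is why the leading variance terms carry $1/M$ and $1/N$ rather than $K/M$ and $K/N$; your stated bounds already reflect this, so it is a presentational rather than substantive gap.
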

\begin{proof}
We can write:
\begin{align}
    \hat V^{\pi_e}_{DR}-V^{\pi_e} 
    =~& \frac{1}{K} \sum_{k=1}^K 
    \left(
    \frac{1}{|I_{e} \cap I_k|} \sum_{i\in I_{e}\cap I_k} (f^{(k)}(Z_i) - V^{\pi_e}) + \frac{1}{|I_{_b} \cap I_k|}\sum_{i\in I_b \cap I_k} 
    \underbrace{a^{(k)}(Z_i) (G_i - f^{(k)}(Z_i))}_{
    \triangleq B_i(f^{(k)},a^{(k)})
    }
    \right)
    \\ 
    =~&  \frac{1}{K} \sum_{k=1}^K \left(\frac{1}{|I_e \cap I_k|} \sum_{i\in I_e\cap I_k} (f_0(Z_i) - V^{\pi_e}) + \frac{1}{|I_b \cap I_k|}\sum_{i\in I_b \cap I_k} B_i(f_0, a_0)\right)\\
    ~&~~ + \frac{1}{K} \sum_{k=1}^K \left(\frac{1}{|I_e \cap I_k|} \sum_{i\in I_e\cap I_k} (f^{(k)}(Z_i) - f_0(Z_i)) + \frac{1}{|I_b \cap I_k|}\sum_{i\in I_b \cap I_k} (B_i(f^{(k)}, a^{(k)}) - B_i(f_0, a_0))\right)\\
    =~&  \frac{1}{M} \sum_{i\in I_e}  (f_0(Z_i) - V^{\pi_e}) + \frac{1}{N}\sum_{i\in I_b} B_i(f_0, a_0)\\
    ~&~~ + \frac{1}{K} \sum_{k=1}^K \left(\frac{1}{|I_e \cap I_k|} \sum_{i\in I_e\cap I_k} (f^{(k)}(Z_i) - f_0(Z_i)) + \frac{1}{|I_b \cap I_k|}\sum_{i\in I_b \cap I_k} (B_i(f^{(k)}, a^{(k)}) - B_i(f_0, a_0))\right)
\end{align}

The first term is a sum of $M$ i.i.d. mean-zero random variables, since $V^{\pi_e}=\mathbb E_{X \sim \pi_e}[f_0(X)]$. Hence, by a Bernstein bound, we have that, w.p. $1-\delta$:
\begin{flalign}
    \left|\frac{1}{M}\sum_{i\in I_e} (f_0(Z_i)-V^{\pi_e})\right| \leq \sqrt{\frac{2\text{Var}_{X \sim \pi_e}(f_0)\log(1/\delta)}{M}} + \frac{2H\log(1/\delta)}{M} \triangleq {\cal E}_1(\delta)
\end{flalign}

Similarly, the second term is the sum of $N$ i.i.d. mean-zero random variables, since $\mathbb E_{Z \sim \pi_b}[G \mid Z] = f_0(Z)$. Hence, by a Bernstein bound, we have that, w.p. $1-\delta$:
\begin{flalign}
    \left|\frac{1}{N}\sum_{i\in I_b} B_i(f_0, a_0)\right| \leq~& \sqrt{\frac{2\text{Var}_{\pi_b}(B(f_0, a_0))\log(1/\delta)}{N}} + \frac{2H^2\log(1/\delta)}{N}\\
    =~& \sqrt{\frac{2\mathbb E_{\pi_b}[a_0(Z)^2 (G - f_0(Z))^2]\log(1/\delta)}{N}} + \frac{2H^2\log(1/\delta)}{N}\triangleq {\cal E}_2(\delta)
\end{flalign}

For each fold $k$, let:
\begin{align}
    \hat{\Delta}_k(f) =~& \frac{1}{|I_e\cap I_k|} \sum_{i\in I_e\cap I_k} (f(Z_i) - f_0(Z_i)) & \Delta(g) =~& \mathbb{E}[\hat{\Delta}_k(g)] = \mathbb E_{\pi_e}[f(X) - f_0(X)]\\
    \hat{\Lambda}_k(f, a) =~& \frac{1}{|I_b\cap I_k|} \sum_{i\in I_b\cap I_k} (B_i(f, a) - B_i(f_0, a_0)) & \Lambda(g) =~& \mathbb E[\hat{\Lambda}_k(f)] = \mathbb E_{\pi_b}[a(Z)\,(G(Z) - f(Z))]
\end{align}
Conditional on the folds $(I_1, \ldots, I_K)$, and on $I_e, I_b$ and on the estimates $f^{(k)}, \hat{\alpha}^{(k)}$, we have that $\hat{\Delta}_k(f^{(k)})$, is an average of $|I_k\cap I_b|$ i.i.d. random variables with mean $\Delta(f^{(k)}) = \mathbb E_{\pi_e}[f^{(k)}(Z) - f_0(Z)]$. Since, we assumed that $|f^{(k)}(Z)|, |f_0(Z)| \leq H C_1$, a.s., by a Bernstein bound we have that w.p. $1-\delta$:
\begin{align}
    |\hat{\Delta}_k(f^{(k)}) - \Delta(f^{(k)})| \leq~& \sqrt{\frac{2\mathbb E_{\pi_e}\left[(f^{(k)}(Z)-f_0(Z))^2\right]\log(1/\delta)}{|I_e\cap I_k|}} + \frac{2H C_1\log(1/\delta)}{|I_e\cap I_k|}\\
    =~& \epsilon_{e}(f^{(k)}) \sqrt{\frac{2K\log(1/\delta)}{M}} + \frac{2H C_1 K\log(1/\delta)}{M}
\end{align}
Similarly, $\hat{\Lambda}_k(f^{(k)}, a^{(k)})$, is an average of $|I_k\cap I_b|$ i.i.d. random variables with mean $\Lambda(f^{(k)}, a^{(k)}) = \mathbb E_{\pi_e}[a^{(k)}(Z)\, (G(Z) - f^{(k)}(Z)]$. 
    
Also recall we assumed that $|G|, |f^{(k)}|, |f_0|$ are a.s. bounded by $C_{1}H$ and $\hat{a}^{(k)}, a_0$ are a.s. bounded by $C_2$, by a Bernstein bound we have that w.p. $1-\delta$:
\begin{align}
    |\hat{\Lambda}_k(f^{(k)}, a^{(k)}) - \Lambda(f^{(k)}, a^{(k)})| \leq \sqrt{\frac{2\mathbb E_{\pi_b}\left[(B_i(f^{(k)}, a^{(k)})- B_i(f_0, a_0))^2\right]\log(1/\delta)}{|I_k\cap I_b|}} + \frac{4H C_1 C_2 \log(1/\delta)}{|I_k\cap I_b|}
\end{align}
Note that:
\begin{align*}
    B_i(g, a)- B_i(f_0, a_0) =~& a(Z)\,(G - f(X)) - a_0(Z)\,(G - f_0(X))\\
    =~& a(Z)\,(G - g(X)) - a_0(Z)\,(G - f(X)) + a_0(Z)\,(f_0(X) - f(X))\\
    =~& (a(Z) - a_0(Z)) (G - f(X)) + a_0(Z) (f_0(X) - f(X))
\end{align*}
Note $(a + b)^2 \leq 2 (a^2 + b^2)$, and $\sqrt{a + b} \leq \sqrt{a} + \sqrt{b}$ (for any $a,b\geq 0$), and again using our assumption  that $|G|, |f^{(k)}|, |f_0|$, $\hat{a}^{(k)}, a_0$ are  almost surely bounded,  we have that:
\begin{align*}
    \sqrt{\mathbb E_{\pi_b}[(B_i(g, a)- B_i(f_0, a_0))^2]} \leq~& \sqrt{2\mathbb E_{\pi_b}\left[4 C_1^2 H^2 (a(Z)-a_0(Z))^2 + C_2^2  (f(X)-f_0(X))^2\right]}\\
    \leq~& \left(3 C_1 H \epsilon_{b}(a^{(k)}) + 2C_2 \epsilon_{b}(f^{(k)})\right)\\
    \leq~& 3 \left(H C_1\,\epsilon_{b}(a^{(k)})  + C_2\epsilon_{b}(f^{(k)})\right)
\end{align*}
Thus we conclude that:
\begin{align*}
    |\hat{\Lambda}_k(f^{(k)}, a^{(k)}) - \Lambda(f^{(k)}, a^{(k)})| \leq~& \left(H C_1\,\epsilon_{b}(a^{(k)})  + C_2\epsilon_{b}(f^{(k)})\right). \sqrt{\frac{2\log(1/\delta)}{|I_k\cap I_b|}} + \frac{4HC_1 C_2\log(1/\delta)}{|I_k\cap I_b|}\\
    \leq~& 3 ( H C_1\,\epsilon_{b}(a^{(k)})  + C_2\epsilon_{b}(f^{(k)}))
    \sqrt{\frac{2K\log(1/\delta)}{N}} + \frac{4HC_1 C_2 K\log(1/\delta)}{N}
\end{align*}

By applying a union bound over all the aforementioned $2K+2\leq 4 K$ bad events, we get that with probability $1-\delta$
\begin{align*}
    |\hat{V}^{\pi_e}_{DR} - V^{\pi_e}| \leq~&  {\cal E}_{1}(\delta / 4K) + {\cal E}_{e}(\delta / 4K) + \frac{1}{K} \sum_{k=1}^K (\Delta(f^{(k)}) + \Lambda(f^{(k)}, a^{(k)})) \\
    ~&~~ + \max_{k=1}^K \epsilon_{e}(f^{(k)}) \sqrt{\frac{2K\log(4K/\delta)}{M}} + \frac{2H K\log(4K/\delta)}{M}\\
    ~&~~ + 3 \max_{k=1}^K 
    3 \{ H C_1\,\epsilon_{b}(a^{(k)})  + C_2\epsilon_{b}(f^{(k)})\}
    \sqrt{\frac{2K\log(1/\delta)}{N}} + \frac{4H C_1 C_2 K\log(4K/\delta)}{N}
\end{align*}
Finally, note that:
\begin{align}
    \Delta(f) + \Lambda(f, a) = M(f, a) - M(f_0, a_0)
\end{align}
with $M(f, a)$ as defined in Theorem~\ref{thm:bound_nuisance_impact}. Then  invoking Theorem~\ref{thm:bound_nuisance_impact}, we also get that:
\begin{align}
    \Delta(f) + \Lambda(f, a) \leq  \epsilon_{b}(a^{(k)})\epsilon_{b}(f^{(k)})
\end{align}
Combining all the above yields the theorem.

\end{proof}


\begin{theorem}
Given Assumption \ref{assmp:surrogacy}, and under the the same assumptions as Theorem~\ref{thm:var_bound}, if our estimates of the regression function $f$ and propensity weights / density ratio $a$ are asymptotically consistent, at any rate, then 
$\hat V^{\pi_e}_{DR}$ is a consistent estimator of $V^{\pi_e}$.
\label{eqn:dr_consistent}
\end{theorem}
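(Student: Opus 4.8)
The plan is to read the conclusion straight off the finite–sample bound of Theorem~\ref{thm:var_bound}. Under Assumption~\ref{assmp:surrogacy} that bound already controls $|\hat V^{\pi_e}_{DR}-V^{\pi_e}|$ directly, the only role of the soft surrogacy assumption being the identification step $V^{\pi_e\pi_b}=V^{\pi_e}$ established in Appendix~\ref{appendix:theory} (without it, the very same computation below yields consistency for the estimand $V^{\pi_e\pi_b}$). So it suffices to hold the horizon $H$, the number of folds $K$, the confidence level $\delta$, and the constants $C_1,C_2$ fixed, and to show that every term on the right–hand side of the Theorem~\ref{thm:var_bound} bound tends to $0$ as $N=|\mathcal D_b|\to\infty$ and $M=|\mathcal D_e|\to\infty$; then I would upgrade this family of high–probability statements to convergence in probability.

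For the term-by-term check I would group the seven summands. The two leading ``oracle variance'' terms have the form $\sqrt{c\,\log(4K/\delta)/M}$ and $\sqrt{c'\,\log(4K/\delta)/N}$ with $c=2\,\Var_{\pi_e}(f_0)$ and $c'=2\,\mathbb E_{\pi_b}[a_0^2(G-f_0)^2]$, both finite (bounded by $C_1^2H^2$ and $4C_1^2C_2^2H^2$ respectively under the boundedness hypotheses), so they vanish as $M,N\to\infty$; the purely ``statistical'' terms $\tfrac{2HK\log(4K/\delta)}{M}$ and $\tfrac{4C_1C_2HK\log(4K/\delta)}{N}$ likewise vanish. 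It remains to handle the three ``nuisance'' contributions: $\tfrac1K\sum_k \epsilon_b(f^{(k)})\,\epsilon_b(a^{(k)})$ is a finite average of products of two quantities each $\to0$ by the consistency hypotheses, hence $\to0$; $\max_k\epsilon_e(f^{(k)})\sqrt{2K\log(4K/\delta)/M}$ is a vanishing (in any case bounded) factor times $\sqrt{1/M}\to0$; and $3C_1H\max_k\{\epsilon_b(f^{(k)})+\epsilon_b(a^{(k)})\}\sqrt{2K\log(4K/\delta)/N}$ is, similarly, a vanishing quantity times $\sqrt{1/N}\to0$. This is exactly where ``consistency at any rate'' is enough: a nuisance error is never multiplied by a factor that grows with $N$ or $M$ — it either appears as a product with another vanishing error, or is damped by an additional $\sqrt{1/N}$ or $\sqrt{1/M}$ — so no rate condition on $\epsilon_e,\epsilon_b,\epsilon_h$ is needed.

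Finally, to pass from ``with probability at least $1-\delta$, $|\hat V^{\pi_e}_{DR}-V^{\pi_e}|\le B_{N,M}(\delta)$'' (with $B_{N,M}(\delta)\to0$ for each fixed $\delta$) to consistency: fix $\eta>0$; for each $\delta\in(0,1)$ pick $n_0(\delta,\eta)$ with $B_{N,M}(\delta)<\eta$ for all $N,M\ge n_0$, so that $\Pr(|\hat V^{\pi_e}_{DR}-V^{\pi_e}|>\eta)\le\delta$ for all such $N,M$; since $\delta$ was arbitrary, $\limsup_{N,M\to\infty}\Pr(|\hat V^{\pi_e}_{DR}-V^{\pi_e}|>\eta)=0$. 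The one non-cosmetic point — and the step I expect to require the most care — is that the errors $\epsilon_e(f^{(k)}),\epsilon_b(f^{(k)}),\epsilon_b(a^{(k)})$ are themselves random (conditional $L_2$ errors given the held-out fold), so ``asymptotically consistent at any rate'' must be read as these errors tending to $0$ in probability; one then either conditions on the training folds and invokes dominated convergence, or folds the event $\{\max_k(\epsilon_e+\epsilon_b+\epsilon_h)>\zeta\}$ into the failure probability for a slowly vanishing $\zeta$ before running the $\delta$-argument above. If the nuisance errors are treated as deterministic sequences, the argument is immediate.
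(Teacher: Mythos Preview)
Your proposal is correct and follows essentially the same approach as the paper: invoke the finite-sample bound of Theorem~\ref{thm:var_bound} and verify that each of the seven terms vanishes as $N,M\to\infty$, the oracle-variance and lower-order terms by the explicit $1/\sqrt{N}$, $1/\sqrt{M}$, $1/N$, $1/M$ rates, and the three nuisance-dependent terms by the assumed consistency of $\hat f$ and $\hat a$. Your write-up is in fact more careful than the paper's one-paragraph proof, in that you spell out the high-probability-to-convergence-in-probability step and flag the subtlety that the $\epsilon$'s are random; the paper simply asserts that the terms ``go to zero.''
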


\begin{proof}
The result immediately follows from Theorem~\ref{thm:var_bound}. Given $|G_H|, |f^{(k)}(\tau_h)|, |f_0(\tau_h)|$, $a^{(k)}(\tau_h)$, and $a_0(\tau_h)$ are a.s. bounded, terms 1,2,4 and 6 all go to zero as as $N$ and $M$ go to infinity. Terms 3, 5 and 7 also all go to zero when the nuisance parameters $f$ and $a$ are asymptotically consistent.
\end{proof}

\begin{theorem}
\label{theorem-productbias}
Define the error in the predicted value as $\Delta(\tau_h,f) = f(\tau_h) - f_0(\tau_h)$ 
and the  density ratio relative to the true density ratio as $\delta(\tau_h) =  \frac{a(\tau_h)}{a_0(\tau_h)} $ (where $f_0$ and $a_0$ are the true regression function and the true density ratio). 
Under Assumptions \ref{assmp:surrogacy}, \ref{assmp:coverage}, the bias of $\hat{V}_{DR}^{\pi_e}$ is 
\begin{equation}
\hat V_{DR}^{\pi_e} - V^{\pi_e} = \frac{1}{K} \sum_{k=1}^K \mathbb E_{\tau_h \sim \pi_e} \left[\Delta(\tau_h,f^{(k)}) \left(1-\delta^{(k)}(\tau_h)\right) \right],
\end{equation}
the average of the product-bias terms across the folds.
\end{theorem}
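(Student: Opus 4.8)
The plan is to evaluate the bias one cross-fitting fold at a time, reduce each fold's contribution to the population moment $M(\cdot,\cdot)$ appearing in Theorem~\ref{thm:bound_nuisance_impact}, apply the exact identity proved there (the line before the Cauchy--Schwarz step), and then change measure from $\pi_b$ to $\pi_e$ using the density ratio.

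First I would fix a fold $k$ and condition on the nuisance estimates $f^{(k)},a^{(k)}$, which are trained on $\bar{\mathcal D}^{(k)}$ and hence independent of the evaluation samples in $\mathcal D_b^{(k)}$ and $\mathcal D_e^{(k)}$. Taking conditional expectations of the two averages defining $\hat V^{(k)}$: the $\mathcal D_e^{(k)}$ term has mean $\mathbb{E}_{\tau\sim\pi_e}[f^{(k)}(\tau)]$, while the $\mathcal D_b^{(k)}$ term has mean $\mathbb{E}_{\tau\sim\pi_b}[a^{(k)}(\tau)(G-f^{(k)}(\tau))]$, which by the tower rule and $f_0(\tau)=\mathbb{E}_{\pi_b}[G\mid\tau]$ equals $\mathbb{E}_{\pi_b}[a^{(k)}(\tau)(f_0(\tau)-f^{(k)}(\tau))]$. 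So the conditional mean of $\hat V^{(k)}$ is exactly $M(f^{(k)},a^{(k)})$ in the notation of Theorem~\ref{thm:bound_nuisance_impact}. This is the same decomposition already used in the proof of Theorem~\ref{thm:var_bound}, but here I keep only the population part and discard the mean-zero sampling fluctuations.

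Next I would identify the estimand: the correction term of $M(f_0,a_0)$ vanishes identically, and by the implication of Assumption~\ref{assmp:surrogacy} derived in Section~\ref{appendix:theory} (namely $\mathbb{E}_{\pi_b}[G\mid\tau]=\mathbb{E}_{\pi_e}[G\mid\tau]$ as functions of $\tau$, hence $\mathbb{E}_{\tau\sim\pi_e}[f_0(\tau)]=V^{\pi_e}$), we get $M(f_0,a_0)=V^{\pi_e}$. The equality in Theorem~\ref{thm:bound_nuisance_impact} then gives, for each $k$,
\[
M(f^{(k)},a^{(k)})-M(f_0,a_0) = \mathbb{E}_{\pi_b}\big[(a^{(k)}(\tau)-a_0(\tau))\,(f_0(\tau)-f^{(k)}(\tau))\big].
\]
Finally, Assumption~\ref{assmp:coverage} guarantees $a_0(\tau)>0$ on the support of $\pi_e$, so I factor $a^{(k)}-a_0=a_0(\delta^{(k)}-1)$ with $\delta^{(k)}=a^{(k)}/a_0$ and use the Riesz-representer / change-of-measure identity $\mathbb{E}_{\pi_b}[a_0(\tau)g(\tau)]=\mathbb{E}_{\pi_e}[g(\tau)]$ to rewrite the right side as $\mathbb{E}_{\pi_e}[(\delta^{(k)}(\tau)-1)(f_0(\tau)-f^{(k)}(\tau))]=\mathbb{E}_{\pi_e}[\Delta(\tau,f^{(k)})(1-\delta^{(k)}(\tau))]$. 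Averaging over $k=1,\dots,K$, with $f^{(k)},a^{(k)}$ held fixed as the statement does, yields the claimed formula.

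The calculation itself is short; the points that need care are the two identifications. The identity $M(f_0,a_0)=V^{\pi_e}$ is exactly where Assumption~\ref{assmp:surrogacy} is essential --- without it $\mathbb{E}_{\tau\sim\pi_e}[f_0(\tau)]$ equals $V^{\pi_e\pi_b}$ rather than $V^{\pi_e}$, and the same argument would only yield the product-bias relative to $V^{\pi_e\pi_b}$. The $\pi_b\!\to\!\pi_e$ change of measure is where Assumption~\ref{assmp:coverage} enters, ensuring $a_0>0$ $\pi_e$-almost surely so that the ratio $\delta^{(k)}=a^{(k)}/a_0$ is well defined on the relevant set. I would also state explicitly that ``bias'' here means the bias conditional on the learned nuisances, so the outer $\mathbb{E}_{\tau\sim\pi_e}$ integrates only a fresh draw of $\tau$ with $f^{(k)},\delta^{(k)}$ fixed; an additional outer expectation over the training folds recovers the unconditional bias.
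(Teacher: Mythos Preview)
Your proposal is correct and follows essentially the same approach as the paper: both compute the fold-wise conditional mean, apply the tower rule with $f_0(\tau)=\mathbb{E}_{\pi_b}[G\mid\tau]$, and change measure via $a_0$ to land on $\mathbb{E}_{\pi_e}[\Delta(1-\delta^{(k)})]$. The only cosmetic difference is that you route through the identity $M(g,a)-M(f_0,a_0)=\mathbb{E}_{\pi_b}[(a-a_0)(f_0-g)]$ from Theorem~\ref{thm:bound_nuisance_impact}, whereas the paper inlines that same calculation directly; your explicit remarks on where Assumptions~\ref{assmp:surrogacy} and~\ref{assmp:coverage} enter, and on the conditioning on the learned nuisances, are clearer than the paper's presentation.
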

\begin{proof}

Without loss of generalizability, we will now analyze this for a single fold $k$. 
Since each trajectory within a fold, is independent and identically distributed, we will analyze the expectation for a single term with a single fold. We also now make the expectation term explicit. We denote $\hat V^{\pi_e}_{k}$ as the value of the DR-estimator for the $k$-th fold.
\begin{align}
\mathbb E_{\tau_h \sim \pi_b; \tau'_h \sim \pi_e} [\hat V^{\pi_e}_{k}] =~& 
\mathbb E_{\tau'_h \sim \pi_e}[f^{(k)}(\tau'_h)] + \mathbb E_{\tau_h \sim \pi_b}[ a^{(k)}(\tau_h)\, (G - f^{(k)}(\tau_h)) ] \nonumber \\
=~& 
\mathbb E_{\tau'_h \sim \pi_e}[f_0(\tau'_h) + \Delta(\tau_h',f^{(k)})] + \mathbb E_{\tau_h \sim \pi_b}[ a^{(k)}(\tau_h)\, (G - (f_0(\tau_h) + \Delta(\tau_h,f^{(k)})) ] \nonumber \\  
=~& 
\mathbb E_{\tau_h' \sim \pi_e}[f_0(\tau'_h) + \Delta(\tau_h',f^{(k)})] + \mathbb E_{\tau_h \sim \pi_b}[ a_0(\tau_h)\delta^{(k)}(\tau_h)\, (G - (f_0(\tau_h) + \Delta(\tau_h,f^{(k)})) ] \nonumber  \\  
=~& 
\mathbb E_{\tau'_h \sim \pi_e}[f_0(\tau'_h) + \Delta(\tau_h',f^{(k)})] + \mathbb E_{\tau_h \sim \pi_e}[ \delta^{(k)}(\tau_h)\, (G - (f_0(\tau_h) + \Delta(\tau_h,f^{(k)})) ] \nonumber  \\  
=~& 
\mathbb E_{\tau'_h \sim \pi_e}[f_0(\tau'_h) + \Delta(\tau_h',f^{(k)})] + \mathbb E_{\tau_h \sim \pi_e}[ \delta^{(k)}(\tau_h)\, (f_0(\tau_h) - (f_0(\tau_h) + \Delta(\tau_h,f^{(k)}))) ] \nonumber  \\  
=~& 
\mathbb E_{\tau'_h \sim \pi_e}[f_0(\tau'_h) + \Delta(\tau_h',f^{(k)})] + \mathbb E_{\tau_h \sim \pi_e}[  - \delta^{(k)}(\tau_h)\,\Delta(\tau_h,f^{(k)}) ] \\  
=~& 
\mathbb E[V_k^{\pi_e}] + 
\mathbb E_{\tau_h \sim \pi_e}[ \Delta(\tau_h,f^{(k)}) (1  - \delta^{(k)}(\tau_h)) ] \\  
\end{align}
where the first equality holds because the first term depends only on trajectories from $\pi_e$ and the second is a function of trajectories from $\pi_b$; the second and third equalities use the definition of $\delta$ and $\Delta$; the fourth equality uses the definition of $a_0$ to replace the expectation over $\pi_b$ to an expectation over $\pi_e$; the fifth equality, with slight abuse of notation, takes an expectation over the randomness in the observed reward for a given state trajectory and rely on $f_0$ being the true regression function, $\mathbb E [G(\tau_h)]= f_0(\tau_h)$; the sixth equality simplifies the prior expression; and the seventh equality uses that $\mathbb E_{\tau'_h \sim \pi_e}[f_0(\tau'_h)] = \mathbb E [V_k^{\pi_e}]$. The theorem follows by averaging over each of the folds.
\end{proof}

\begin{theorem}
If the regression function $f$ is asymptotically consistent, then 
$\hat V^{\pi_e}_{\text{soft}}$ is a consistent estimator of $V^{\pi_e}$.
\label{eqn:slev_consistent}
\end{theorem}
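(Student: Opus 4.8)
The plan is to mimic the (simpler, density-ratio-free) version of the argument used for the doubly robust estimator: decompose $\hat V^{\pi_e}_{\text{soft}} - V^{\pi_e}$ into a Monte-Carlo averaging term and a regression-estimation term, and show each vanishes. Write $f_0(\tau) = \mathbb E_{\pi_b}[G \mid \tau]$ for the population target of the least-squares problem in Eq.~(4), so that $f_{\text{soft}}$ is an estimate of $f_0$, and recall from Appendix~\ref{appendix:theory} that Assumption~\ref{assmp:surrogacy} yields $\mathbb E_{\tau \sim \pi_e}[f_0(\tau)] = V^{\pi_e}$. Then, with $M = |\mathcal D_e|$,
\begin{equation*}
\hat V^{\pi_e}_{\text{soft}} - V^{\pi_e} = \underbrace{\frac{1}{M}\sum_{\tau \in \mathcal D_e}\big(f_{\text{soft}}(\tau) - f_0(\tau)\big)}_{(\mathrm{I})} \;+\; \underbrace{\frac{1}{M}\sum_{\tau \in \mathcal D_e} f_0(\tau) \;-\; \mathbb E_{\tau \sim \pi_e}[f_0(\tau)]}_{(\mathrm{II})}.
\end{equation*}

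First I would dispatch $(\mathrm{II})$: $\mathcal D_e$ consists of i.i.d.\ on-policy trajectories from $\pi_e$, and under the boundedness hypothesis carried over from Theorem~\ref{thm:variance-dr} ($|f_0|$ bounded by $C_1 H$), a weak law of large numbers (or the same Bernstein-type bound used in the proof of Theorem~\ref{thm:variance-dr}) gives $(\mathrm{II}) \xrightarrow{p} 0$ as $M \to \infty$. For $(\mathrm{I})$, the key point is that $f_{\text{soft}}$ is trained on $\mathcal D_b$, which is independent of $\mathcal D_e$; conditioning on $\mathcal D_b$, the summands in $(\mathrm{I})$ are i.i.d.\ with conditional mean $\mathbb E_{\tau \sim \pi_e}[f_{\text{soft}}(\tau) - f_0(\tau)]$, whose absolute value is at most $\epsilon_e$ by Jensen's inequality (definition~(9)). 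A second law-of-large-numbers argument (again using boundedness) shows $(\mathrm{I})$ concentrates around this conditional mean as $M \to \infty$, hence $|(\mathrm{I})| \le \epsilon_e + o_P(1)$. Finally, ``asymptotic consistency of the regression function $f$'' is exactly the statement that $\epsilon_e \to 0$ as $N = |\mathcal D_b| \to \infty$ --- or, if consistency is posited only in $L^2(\pi_b)$, it transfers to $L^2(\pi_e)$ via the coverage Assumption~\ref{assmp:coverage} and bounded density ratio through $\epsilon_e^2 \le C_2\,\epsilon_b^2$. Letting $N, M \to \infty$, both $(\mathrm{I})$ and $(\mathrm{II})$ vanish, giving $\hat V^{\pi_e}_{\text{soft}} \xrightarrow{p} V^{\pi_e}$.

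The main obstacle is not any single estimate but the bookkeeping of the two independent sample sizes and pinning down what ``asymptotically consistent'' should mean here: the clean version is that the estimation term $(\mathrm{I})$ is controlled by $\epsilon_e$, a property of $\hat f$ (trained on $\mathcal D_b$) measured against the \emph{target} distribution $\pi_e$, so one must either assume $\epsilon_e \to 0$ directly or derive it from $L^2(\pi_b)$-consistency plus the bounded density ratio implied by Assumption~\ref{assmp:coverage}. Everything else --- the two applications of the law of large numbers and the Jensen step --- is routine under the boundedness hypotheses already in force, and is in fact a strict simplification of the argument in the proof of Theorem~\ref{thm:variance-dr} with the density-ratio correction term removed.
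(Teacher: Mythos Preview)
Your proposal is correct and follows essentially the same approach as the paper: decompose $\hat V^{\pi_e}_{\text{soft}} - V^{\pi_e}$ into a regression-error term $(\mathrm{I})$ and a Monte-Carlo term $(\mathrm{II})$, then argue each vanishes. The paper's proof is terser---it bounds $|(\mathrm{I})|$ directly by the empirical $L^1$ error $\frac{1}{M}\sum |f(\tau_h^i)-f_0(\tau_h^i)|$ and simply asserts this goes to zero under ``asymptotic consistency,'' whereas you are more explicit about conditioning on $\mathcal D_b$, invoking Jensen to pass to $\epsilon_e$, and spelling out how $L^2(\pi_b)$-consistency transfers to $\pi_e$ via Assumption~\ref{assmp:coverage}; but the skeleton and conclusion are the same.
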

\begin{proof}
Recall 
\begin{eqnarray}
|\hat V^{\pi_e}_{\text{soft}} - V^{\pi_e}| &=& \left|\frac{1}{M} \sum_{i=1}^{M} f(\tau_{h}^j) - V^{\pi_e} \right| \\
&=&\left| \frac{1}{M} \sum_{i=1}^{M} f(\tau_{h}^i) - f_0(\tau_{h}^i) +f_0(\tau_{h}^i) - V^{\pi_e}  \right| \\
&\leq& \frac{1}{M} \sum_{i=1}^{M} \left|f(\tau_{h}^i) - f_0(\tau_{h}^i)\right| + \left| \frac{1}{M} \sum_{i=1}^{M}  f_0(\tau_{h}^i) - V^{\pi_e} \right|\\
&\to& 0
\label{eqn:slr}
\end{eqnarray}
In the third line, the first term goes to zero if $f$ is asymptotically consistent, and the second term is the error due to the finite sample approximation of the expectation $\mathbb E_{\tau_h \sim \pi_e}[.]$, and also goes to zero asymptotically. 
\end{proof}

\section{Experiments}

We design a toy environment to evaluate the robustness of the proposed estimators to the regression model and the density ratio estimator mis-specification. We assume a historical dataset of full-horizon observations as well as a smaller on-policy dataset of the new target policy but only executed for short horizons. First we describe the details about the environment's states, transitions, returns, and surrogacy.

\paragraph{Domain}Initial states $s_0$ are evenly spaced between [0, 1.5] and added by a Gaussian noise $\sim N(0, 0.1)$. Under the behavioral policy $\pi_b$, transitions to the next state are as follows:

\[
s_{t+1} =
\begin{cases} 
s_0 & \text{w.p 0.5} \\
(-0.6 + 0.1 * U[0,1)) * s_t & \text{w.p 0.45} \\
1.5 & \text{otherwise}
\end{cases}
\] Under the new target policy $\pi_e$, the next states are defined as:
\[
s_{t+1} =
\begin{cases} 
1.5 & \text{if $s_t < 1.25$} \\
0 & \text{otherwise}
\end{cases}.
\] Every state is added a noise from $N(0, 0.1)$. The true long-term return is defined by the quadratic function of $\tau_{0:1}$ as $f(s_0, s_1) = 5 s_0 + s_1 + s_1^2 + N(0, \omega)$ where $\omega$ determines the amount of noise, i.e., the higher the $\omega$, the more reliable the regression estimates are compared to Monte Carlo sampling. $|D_b| = 5000$, and $|D_e| = 100$. 

We fit an ordinary least squares regression model using `statsmodel` packages. A correct regression model is given by: $f_\theta(s_0, s_1) = \theta^\top [s_0, s_1, s_1^2]$. For density ratio estimation, continuous states are discretized into 50 bins (thus making a total of 2500 bins for $(s_0, s_1)$), and the density ratio estimator is built based on the number of occurrences of each bin under $\pi_e$ versus $\pi_b.$ We report the mean squared error of the true and the estimated returns of the 100 target trajectories under $\pi_e$: $\sqrt{\frac{1}{|D_{e}|}\sum_{i=1}^{100} (\hat V^{\pi_e}_i - V^{\pi_e}_i)^2}$. 

\begin{table}[h]
\centering
\caption{We show the mean and standard deviation of Mean-Squared Error (MSE) over 200 random seed runs.}
\begin{tabular}{@{}lcc@{}}
\toprule
& \begin{tabular}[c]{@{}l@{}}\textbf{Large noise} \\ ($\omega = 10$)\end{tabular} & \begin{tabular}[c]{@{}c@{}}\textbf{Small noise} \\ ($\omega = 1$)\end{tabular} \\ \midrule
Soft surrogate estimator  & \textbf{0.251 (0.307)}    &   \textbf{0.002 (0.003)}  \\
Monte Carlo sampling  &  98.457 (13.355)   &  0.984 (0.133)   \\ \bottomrule
\end{tabular}
\label{table:synthetic_noise}
\end{table}

As discussed in the main text, Table \ref{table:synthetic_noise} shows that our estimates using the short trajectory data provide more accurate predictions of $V^{\pi_e}$ than Monte Carlo sampling due to the high noise in the observed outcomes. This suggests that even when decision makers have the budget for full-horizon observations, it may still be beneficial to consider soft surrogate estimators due to the variance-bias trade-offs.

\paragraph{Model Mis-specification}We now consider the robustness of the estimators under the model mis-specification. As described above, a correct regression model is a quadratic function of $(s_0, s_1)$: $f(s_0, s_1) = 5 s_0 + s_1 + s_1^2$. However, we purposely use a mis-specified linear model, $f_{\tilde \theta}(s_0, s_1) = {\tilde \theta}^\top [s_0, s_1]$, which cannot describe the historical data distribution. Note that this (mis-specified) linear model can still capture the data distribution under the target policy because $s_1$ under $\pi_e$ is always going to be either 0 or 1.5. As a result we expect the weighted regression and the doubly robust estimator to be unaffected by the regression model mis-specification as long the density ratio estimates are correct, but the unweighted regression will fail to predict for the target distribution.

In order to bias the density ratio estimates, we add a non-zero gaussian noise $\sim N(10, 10)$ to the denominator of $\frac{p(s_0, s_1|\pi_e)}{p(s_0, s_1|\pi_b)}$. We expect the unweighted estimator and the doubly robust estimator to be unaffected by the density ratio mis-specification as the regressor model is still correct. While the weighted estimator is still asymptotically consistent, with finite samples, it may suffer from the bias in density ratio estimates.

\paragraph{Results}
\begin{table}[h]
\centering
\caption{We show the MSE of the estimators with noise $\omega = 1$, when either the regression model or the density ratio model is mis-specified. Mean and std are from 200 random seed runs.}

\resizebox{\linewidth}{!}{
\begin{tabular}{@{}lccc@{}}
\toprule 
& \textbf{Both Realizable} & \textbf{Regression Misspecified} & \textbf{Density Ratio Misspecified} \\ \midrule
Soft surrogate estimator & 0.002 (0.003)   & \textbf{0.914 (0.064)} & 0.002 (0.003) \\
Short surrogate weighted estimator   & 0.079 (0.085)   & 0.080 (0.068)   & \textbf{0.388 (0.728)}  \\ 
DR soft surrogate estimator & 0.008 (0.007)   &  0.008 (0.007) & 0.006 (0.005) \\ \bottomrule
\end{tabular}}
\label{table:synthetic_weighted}
\end{table}

Table \ref{table:synthetic_weighted} compares the performance of the 3 different estimators under different types of mis-specification. As expected, the doubly robust method does well in all three cases. The weighted regression gives accurate predictions even when the regression model is mis-specified as long as the density ratio estimates are correct. On the other hand, when the density ratio estimates are incorrect, the unweighted regression and the DR estimator predict accurately. The weighted regression suffers from the bias in density ratio estimates; but since it is also a consistent estimator, the effects due to the incorrect density ratio estimates are lessened as the training data size increases and the bias in the density ratio estimates are swept away by the training data. We show that in Table \ref{table:synthetic_weighted2} as the training data size increases from 500 to 50000, the MSE drops from 2.123 to 0.259.

Our synthetic experiment demonstrates that the doubly robust estimator is robust to both kind of model mis-specification as long as the other model is correct. The unweighted regression model has high errors when the regression model is mis-specified; while the weighted regression shows high errors when the density ratio estimates are biased, which are exacerbated if the historical data size is small.

\begin{table}[h]
\centering
\caption{MSE of the weighted regression model across different training data sizes.}
\begin{tabular}{@{}lc@{}}
\toprule
Training data size &  Error \\ \midrule
500  & 2.123 (7.382) \\
1000  & 0.806 (1.281) \\
50000 & 0.259 (0.480) \\
\bottomrule
\end{tabular}
\label{table:synthetic_weighted2}
\end{table}

\section{Baseline Methods}
\label{appendix:baseline}

\paragraph{LOPE estimator} ~\citet{saito2024long} proposes an estimator that can be robust to the violation of surrogacy by additionally accounting for the (unobserved) effect of the next action on the future states. Specifically their estimator is of the following form:
\begin{eqnarray*}
\hat V_{\text{LOPE}}^{\pi_e} = \frac{1}{|\mathcal D_b|} \sum_{i=1}^{|\mathcal D_b|} \frac{p(\tau^{(i)}|\pi_e)}{p(\tau^{(i)} | \pi_b)} (G^{(i)} - \hat f(\tau^{(i)}, a_{h+1}^{(i)}) \\ + \underbrace{\mathbb E_{a_{h+1}' \sim \pi_e(.|\tau^{(i)})} \hat f(\tau^{(i)}, a_{h+1}'))}_{\text{useful if surrogacy breaks}},
\end{eqnarray*} where $\hat f$ is the empirical minimizer of $\mathcal D_b$ defined as: $\frac{1}{|\mathcal D_b|} \sum_{i=1}^{|\mathcal D_b|} \left(G^{(i)} - \hat f(\tau^{(i)}, a_{h+1}^{(i)})\right)^2$. They build robustness to the violation of the surrogacy assumption via the second term. However, the LOPE estimator is originally designed for contextual bandit policies, and thus only considers the (unobserved) effect of the immediate next action $a_{h+1}$ on the future states, rather than the full-horizon sequential RL policies.

\paragraph{Online-model based prediction}
We fit a transition model under the new target policy, $\hat p(s_{t+1}|s_t, a_t)$ using the on-policy data $\mathcal D_e$, and first predict the future (unobserved) states under the target policy, then predict the corresponding long-term rewards for those states. ~\citet{tran2024icml} considered settings where with sufficient coverage of the states and actions in the short horizon data, the transition dynamics can be fully learned from the on-policy data alone. In such cases, the online-model based prediction can provide accurate estimates.

\paragraph{Extrapolation of the average short-term reward} We multiply the average reward observed within the short horizon by the full horizon length (similar to the naive baseline in ~\citet{tran2024icml}). This may be reasonable if the state sequences are periodic, and the full cycle is captured in the short-horizon data.

\paragraph{Extrapolation of the last short-term reward} This is a slightly different version of (3) where instead of taking the average value, the last observed reward is multiplied by the full horizon length. This may be reasonable if the states reach an equilibrium or only experience small deviations from the last state after the initial observation window.

\paragraph{Full-horizon Monte Carlo sampling} This requires observing the full horizon outcome of the target policy, so it would not be feasible in settings of our interest where the long-term outcome takes a prohibitively long amount of time to observe. We include this to benchmark the performance of using a finite set of on-policy trajectories to estimate the long-term policy value. Even though MC estimates are unbiased, their variance may be high if the on-policy data is limited, and using the short-horizon soft surrogates may give smaller prediction errors due to the variance-bias trade-off.

\section{Clinical Simulations}
\label{appendix:clinical simulators}
\subsection{HIV Treatment}
\paragraph{Domain} We generate 500 initial patient states with a perturbation rate of 0.05 using the simulator designed by ~\citet{ernst2005}. The environment has no transition stochasticity, so if the policy is deterministic, the generated trajectories from a given initial state and a policy are deterministic. We execute the target policy with perfect fidelity ($\epsilon = 0$) for short horizons, collecting 500 short-term trajectories; while the behavioral policy is executed with 0.95 fidelity, for 5 times per initial patient state, which yields 2500 long-term trajectories. We do not compare our estimators to MC sampling in HIV because due to the environment's lack of stochasticity, we define the ground truth policy value as the average MC estimates.

Reward is given at every time step in the 200 horizon. It is determined by the number of free virus particles and the HIV-specific cytotoxic T cells. Following RL standards, the return $G$ is the un-discounted sum of rewards over the full horizon. 


\paragraph{Policy}We first learn an optimal behavioral policy using Fitted Q Iteration (FQI). FQI is implemented with an extra-trees regressor from `sklearn` packages with n estimators = 50 and min samples split = 2. An $\epsilon$-greedy policy ($\epsilon = 0.15$, except when the policy is first initialized, $\epsilon = 1$) is used for collecting on-policy samples under the newly fitted policy, and the policy is rolled out 30 times between policy improvement. 400 Bellman backups are done with a discount factor of 0.98, and we apply 10 iterations of policy improvement, each time using the replay buffer of all previously collected samples. T We set $\pi_b$ to be the optimal policy from the last policy improvement step.

To model scenarios where a new policy is proposed by swapping some of the actions in the behavioral policy with the new (improved) ones, we replace the small dosage actions in the behavioral policy with zero dosages. For example, this means 0.1 of RTI and 0.05 of PI in the behavioral policy is now zero dosage under the new target policy. We assume that the target policy is executed with perfect fidelity, so $\epsilon$ probability of taking a random (sub-optimal) action is 0 under the new policy. As stated before, the historical data has 2500 long-term patient trajectories, and the on-policy data only has 500 short-term patient trajectories and unknown long-term outcomes. 


We've included our code in the supplementary materials and the datasets and the codebase will be shared as a public repo after the anonymous review process.

\paragraph{Soft surrogate estimator training details} Density ratio estimators are implemented as binary classifiers with an MLP classifier from `sklearn' packages. Specifically, we set the labels for the off-policy trajectories as 0 and the labels for the on-policy trajectories as 1. Hyperparameter selection is done with 5-fold cross validation to select the best model parameters over the following sets of values:
\begin{itemize}
\item MLP hidden layer sizes: (128), (128, 64), (128, 64, 64) 
\item MLP $\alpha$ (l2 regularization): 0.001, 0.01, 0.1
\item Learning rate: adaptive, constant
\end{itemize}All models are implemented with the  Adam optimizer and ReLU activation between the layers. 
The un/weighted regression model is implemented with Support Vector Regressor (SVR module from `sklearn'), and we also did 5-fold cross validation for hyperparameter selection over the following values:
\begin{itemize}
\item C (regularization): 0.1, 1, 10
\item Epsilon: 0.001, 0.01, 0.1
\end{itemize}For $k$-fold cross fitting of the doubly robust methods~\citep{chernozhukov2017doubledebiasedmachinelearningtreatment}, we set $k=2$.

\paragraph{Baseline methods training details}\label{appendix:baseline_experiments}We used the same model class and followed the same hyper-parameter selection process for the LOPE estimator ~\cite{saito2024long}, which requires training the density ratio estimator and the regression model. For the regression model, which takes as inputs the short-term trajectories and the last action, we one-hot-encoded the discrete actions and concatenated the trajectory data with the actions.

For training the online model-based RL, we implement the transition dynamics model with `sklearn` packages Multi Output regressor (and SVR as a sub-module). Similarly as before, we performed 5-fold cross validation over the following parameters: 
\begin{itemize}
\item C (regularization): 0.1, 1, 10
\item Epsilon: 0.1, 0.2.
\end{itemize}

The reward extrapolation baselines and MC sampling only require on-policy data, but no model training is required using the historical data.


\subsection{Sepsis}

\paragraph{Domain} We use the implementation by ~\citet{oberst2019counterfactual}, which defines the patient state as a vector of 5 values: an indicator for whether the patient is diabetic, and four vital signs--heart rate, blood pressure, oxygen concentration, glucose level--that take values in a subset of \{very high, high, normal, low, very low\}. The state space size is 1440. The action space sizes differ between the behavioral policy and the target policy. The behavioral policy chooses from two binary options of antibiotics and mechanical ventilation, thus $|\mathcal A| = 4$; while the target policy has an additional option of vasopressors, which now makes $|\mathcal A^+| = 8$. 

Each patient's trajectory is at most 20 steps long, but may terminate early with -1 if at least three vitals are out of the normal range, or with 1 if all vital signs are in the normal range without any active treatment. If the patient is either dead or discovered, no further reward is given, and all intermediate states are rewarded by 0. We use a discount factor of 0.99 for calculating the long-term policy value. 

\paragraph{Policy}Similar to~\citet{namkoong2020off}, we use policy iteration to learn the optimal policy, and create a soft-optimal policy as the behavior policy by having the policy take a random action with probability 0.15. The value function is computed using value iteration with the discount factor of $\gamma=0.99$. Similar process is used to learn the target policy but with the different action space $\mathcal A^+$, which includes includes vasopressors in addition to $\{\text{antibiotics}, \text{ mechanical ventilation}\}$ already available in $\mathcal A$.

In sepsis, the decision process is stochastic, so we define the true policy value as the average MC estimates of 5000 roll-outs. In the on-policy dataset used to estimate the long-term value, we only include 500 bootstrapped samples, so the historical dataset has 5000 full-horizon trajectories and the on-policy dataset only has 500 short-term samples. We evaluate our estimators across 5 random seed runs to compute the mean and the standard deviation of the prediction mean squared errors. In each seed run, the on-policy dataset has a different set of bootstrapped samples. 

\paragraph{Soft surrogate estimator training details}
Since the Sepsis domain has a reasonably sized state space of 1440 discrete states, we build the density ratio estimator $\hat h$ based on the frequency of a given trajectory in the on-policy data versus the off-policy data.

For fitting the regression model, we consider the following sets of hyperparameters to select the best performing hyperparameter from 5-fold cross validation are: \begin{itemize}
\item MLP hidden layer sizes: (128), (128, 64)
\item Learning rate: 0.001, 0.01, 0.1
\end{itemize}All models are implemented with Pytorch MLP Regressor and ReLU activation between the layers.

\paragraph{Baseline methods training details}\label{appendix:baseline_experiments2}In the sepsis simulator, if the patient is dead or discharged by $h$, their return is fully known, so we consider online baselines where if the patient's outcome is known by $h$, then there's no error; and for those whose outcomes are still unknown, their long-term returns are projected to be 0. Note that in the sparse reward setting like sepsis, the two reward extrapolation methods -- `final short-term reward' and `average short-term reward' -- behave the same since any unobserved patient outcomes (both average and last rewards observed so far are 0) are predicted to be 0 even in the future.

Since the states are discrete with size 1440 and the actions are of size 8, we build an online dynamics model based on the number of times a given tuple $(s, a, s')$ is visited from $(s, a)$, $\hat p(s, a, s') = \frac{N(s, a, s')}{N(s, a)}$. For any pairs of $(s, a)$ that are unobserved, we set the transition probabilities to the next state as uniform across the entire state space.  

For fitting the regression part of the SLOPE estimator~\cite{saito2024long}, we consider the following hyperparameters:
\begin{itemize}
\item MLP hidden layer sizes: (128), (128, 64)
\item Learning rates: 0.001, 0.01, 0.1
\end{itemize}All models are implemented with Pytorch MLP Regressor and ReLU activation is applied between the NN layers. We one-hot-encode the actions and concatenate them with the short-term data to construct the inputs to the regression model $f(\tau_h,  a)$. 

\subsection{Computational Resources} Computation for all of the experiments was done on internal servers with GPUs, but they can be done without GPU with a reasonable wall-clock time of less than 10 hours, which may vary depending on the size of the hyperparameter search space and the number of folds for cross validation.

\subsection{Additional Experiments}\label{appendix:additional_results}

\subsubsection{Hypothesis testing of the estimated difference between behavioral policy and target policy's value} While directly predicting the long-term policy value is our primary interest, we also consider the potential application of using our methods to quickly identify whether the new target policy's value is going to be significantly different from the behavioral policy value. To handle this task, we propose running hypothesis testing to compare the observed long-term returns under the behavioral policy and the estimated long-term outcomes of the new target policy. In the HIV domain, the target policy and the behavioral policy are both executed from the same set of 500 initial patients, so we apply paired T-test; while in Sepsis, they share the same initial distribution of the patients but the exact starting conditions may not match, so we perform independent T-test. We use \texttt{scipy.stats} packages \texttt{stats.ttest.rel} for HIV and \texttt{stats.ttest.ind} for Sepsis. In both domains, the behavioral policy value is significantly lower than the target policy. We are interested in whether our estimators can successfully reject the null hypothesis with limited on-policy data from short horizons. 



Table \ref{appendix:t_tests} shows the evaluation of our estimators at different short horizons and their corresponding p-values from running the T-test to compare the estimated target policy's returns and the observed behavioral policy's outcomes. We included the columns for $h = 20$ and $h = 2$ in the main text for the HIV and the Sepsis domains. Promisingly, our results show that in the Sepsis domain, all our estimators achieve low MSE and significant p-values to successfully reject the null hypothesis using the short-horizon data from $h = 2$. In the HIV domain, our estimators, particularly the soft surrogate estimator and the DR soft surrogate estimator, require a longer $h$ to be able to reject the null. Our weighted regression models can still yield significant p-values with much shorter horizon of $h = 5$, which is 2.5\% of the full horizon in the HIV simulations.

Interestingly we observe that the regression-based estimator's performance does not improve monotonically as $h$ increases while the methods that rely more heavily on on-policy data improve monotonically with a larger $h$. We suspect this is due to the increased covariate shift between the training and the test distribution. While increased $h$ gives more signal about the policy's effects on the states and the rewards, the regression models are also trained on inputs of longer lengths, $\tau_{0:h}$, which may affect the regression model's variance if the training data is limited. We leave it as an interesting open question to explore the selection of short horizon $h$ to handle the trade-off between more signal in the data and higher variance in the estimators.

 \begin{table*}[t!]
\centering
\caption{\textbf{HIV estimation results at different short-horizons: MSE and p-value from pair T-test.}}
\begin{tabular}{@{}lcc|cc|cc|cc@{}}
\toprule 
 & \multicolumn{2}{c}{h = 5} & \multicolumn{2}{c}{h = 10} & \multicolumn{2}{c}{h = 20} & \multicolumn{2}{c}{h = 50}  \\ 
Estimator & Error & p-value & Error & p-value &Error  & p-value & Error  & p-value\\ \midrule
Soft surrogate estimator    & 68.16 & p $\geq$ 0.5    & 68.15 &  p $\geq 0.5$ & 67.15 & p $\geq 0.05$  & 57.96 & p $\leq 10^{-6}$\\
Weighted soft surrogate estimator   & 61.84  & p $\leq$ $10^{-6}$   & 71.51  &  p $\leq$ $10^{-6}$ & 53.21& p $\leq 10^{-6}$ & 53.85 & p $\leq 10^{-6}$\\
DR soft surrogate   & 67.99  & p $\geq$ 0.7   & 68.32  & p $\geq 0.3$ & 64.86&  p $\leq 0.01$& 58.15 & p $\leq 10^{-6}$\\ 
DR weighted soft surrogate   & 57.80 & p $\leq 10^{-6}$ & 70.90  & p $\leq$ $10^{-6}$ & 50.61 & p $\leq 10^{-6}$ & 55.26 & p $\leq 10^{-6}$ \\
LOPE~\cite{saito2024long} & 73.84 & p $\leq 10^{-6}$&69.95 & p $\leq 10^{-5}$ & 67.75 & p $\geq 0.3$ & 71.91 &p $\leq 10^{-6}$ \\
\bottomrule
\end{tabular}

\caption{\textbf{Sepsis estimation results at different short-horizons: MSE (mean $\pm$ std across 5 seeds) and p-value from pair T-test.}}
\begin{tabular}{@{}lcc|cc|cc@{}}
\toprule 
 & \multicolumn{2}{c}{h = 2} & \multicolumn{2}{c}{h = 4} & \multicolumn{2}{c}{h = 5} \\ 
Estimator & Error & p-value & Error & p-value &  Error & p-value\\ \midrule
Soft surrogate estimator    & 0.04 (0.02) & p $\leq 10^{-6}$   & 0.02 (0.01) &  p $\leq 10^{-6}$ & 0.02 (0.02) & p $\leq 10^{-6}$ \\
Weighted soft surrogate estimator   & 0.03 (0.01) & p $\leq 10^{-6}$ & 0.01 (0.01)  & p $\leq$ $10^{-6}$   & 0.04 (0.01) & p $\leq 10^{-6}$ \\
DR soft surrogate   & 0.04 (0.01) & p $\leq 10^{-6}$   & 0.01 (0.005)  & p $\leq 10^{-6}$ &  0.03 (0.01) & p $\leq 10^{-6}$\\ 
DR weighted soft surrogate   & 0.03 (0.01)  & p $\leq 10^{-6}$ & 0.02 (0.004) & p $\leq$ $10^{-6}$ & 0.01 (0.01) & p $\leq 10^{-6}$ \\
LOPE~\cite{saito2024long} & 0.14 (0.25) & p $\leq 10^{-6}$ & 0.10 (0.01) & p $\leq 10^{-6}$ & 0.19 (0.13) &p $\leq 10^{-6}$ \\
\bottomrule
\end{tabular}
\label{appendix:t_tests}
\end{table*}

\subsubsection{Evaluation across different target policy values}
We considered a wider range of target policy values to evaluate the performance of our proposed estimators against the baselines when the target policy's value differs more or less than the behavioral policy's value. 

\begin{table}[t!]
\centering
\caption{\textbf{Each column represents a unique target policy, whose value is listed on the first row. The rest of the table shows the MSE made by the estimators for each of the 8 target policies in the HIV domain. All estimates are made with the short-term data from $h=20$. The first 4 estimators are our proposals based on regression.}}
\resizebox{\linewidth}{!}{
\begin{tabular}{ c |c| c| c| c| c| c| c| c }
\toprule 
 
\textbf{Estimator}    \\ \midrule
Ground-truth policy value & 350.55 &417.37 &419.94 & 404.87 & 342.49 & 411.18 & 407.44 & 401.89 \\ \midrule
Weighed soft surrogate estimator & 0.61 & 66.78 & 68.57 & 53.21 & 8.68 & 62.16 & 56.07 & 51.6 \\ 
Soft surrogate estimator & 12.83 & 79.66 & 82.22  & 67.15 & 6.41 & 73.84 & 69.73 & 64.7 \\  
DR soft surrogate  & 10.49 & 77.19 & 79.83 & 64.86 & 8.93 & 66.94 & 67.38 & 65.53 \\
DR weighted soft surrogate & 2.35 & 69.14 & 71.07 & 50.61 & 2.63& 64.41 & 57.87 & 51.69 \\
LOPE~\cite{saito2024long}& 13.43 & 80.26 & 82.83 & 67.75 &  5.38 & 74.06 & 70.33 & 64.78\\
Online model-based method & 348.19 & 415.15 & 417.68 & 402.49 & 340.15 & 408.98 & 405.18 & 399.52\\
Last reward extrapolation &  349.51 & 416.32 & 415.5 & 400.56 & 338.89 & 407.47 & 403 & 398.06\\
Average reward extrapolation & 346.28 & 413.07 & 418.86 & 403.83 & 341.59 & 410.23 & 406.36 & 400.88\\
\bottomrule
\end{tabular}}
\label{supp_table:hiv}
\end{table}

In the HIV experiments, we considered the case of the target policy replacing the behavioral policy's small dosage with zero and executed with perfect fidelity. We now introduce target policies where the small dosage is reduced by 25\%, 50\%, 75\%, and 100\% of the original amount, and each policy is executed with different levels of fidelity, i.e., we consider $\epsilon \in \{0, 0.1\}$, which means with a probability of $\epsilon$, the policy chooses a random action instead of the optimal action prescribed by the policy. The resulting target policy values are: 350.55, 417.37, 419.94, 404.87, 342.49, 411.18, 407.44, 410.89. The behavioral policy value is 337.01. We show the performance of different estimators--including our own and the baselines-- across these 8 target policies in Table \ref{supp_table:hiv}. In the HIV domain, we observe that our estimators outperform all the baselines across all target policy values.

In the Sepsis domain, we considered new target policies with varying values of epsilon ($\epsilon$ in \{0.01, 0.02, 0.05, 0.1, 0.2, 0.25, 0.3\}). The target policy, whose results are reported in the main text, is executed with $\epsilon = 0.15$. With different values of epsilon, we get the following target policy values: 0.119, 0.110, 0.057, 0.019,-0.006, -0.022, -0.053, -0.065. The behavioral policy value is -0.178. We show the performance of our estimators across different policy values in Table \ref{supp_table:sepsis}.

\begin{table}
\centering
\caption{\textbf{Each column represents a unique target policy, whose value is listed on the first row. The rest of the table shows the MSE made by the estimators for each of the 8 target policies in the Sepsis domain. All estimates are made from the short-term data from $h = 2$. The first four estimators are our own based on regression.}}
\resizebox{0.9\linewidth}{!}{
\begin{tabular}{ c| c |c| c |c| c| c |c| c }

\toprule 
\textbf{Estimator}\\ \midrule
Ground-truth policy value & 0.119 & 0.110 & 0.057 & 0.019 & -0.006 & -0.002 & -0.053 & -0.065 \\ \midrule

Weighted soft surrogate estimator & 0.15 & 0.10 & 0.08 & 0.06 & 0.03 & 0.04 & 0.02 & 0.03 \\ 
Soft surrogate estimator & 0.14 & 0.11 & 0.06 & 0.04 & 0.04 & 0.02 & 0.008 & 0.02 \\  
DR soft surrogate & 0.13 & 0.11 & 0.07 & 0.04 & 0.04 & 0.03 & 0.01 & 0.01\\
DR weighted soft surrogate & 0.13 & 0.10 & 0.10 & 0.08 & 0.03 & 0.02 & 0.01 & 0.03\\
LOPE~\cite{saito2024long} & 0.39 & 0.09 & 0.05 & 0.07 & 0.14 & 0.06 & 0.06 & 0.16\\
Online model-based method & 0.37 & 0.39 & 0.48 & 0.55 & 0.62 & 0.62 & 0.67 & 0.67 \\
Last reward extrapolation &  0.002 & 0.005 & 0.05 & 0.10 & 0.07 & 0.07 & 0.1 & 0.06\\
Average reward extrapolation & 0.002 & 0.005 & 0.05 & 0.10 & 0.07 & 0.07 & 0.1 & 0.06\\ \bottomrule
\end{tabular}}
\label{supp_table:sepsis}
\end{table}

Surprisingly, we observe that the short-term reward extrapolation baselines perform well on 2 out of the 8 target policies. In particular, these policies have small epsilon values of 0.01 and 0.02. When the epsilon is small, most of the patients, who do not die or recover within the short horizon of $h = 2$, receive a reward of 0. Therefore, predicting the unknown patient outcomes as 0, as done by the short-term reward baselines, actually yields good estimates close to the ground-truth values. However, these baselines are brittle to the environment's stochasticity as we observe that with higher epsilon values, most patients' outcomes are unknown from the short horizon. As the target (ground-truth) policy value deviates away from 0, predicting the unknown patient outcome as 0 becomes a poor estimate. For example, when the epsilon is set to 0.25 or 0.3, the reward extrapolation baselines incur the large error of 0.1 and 0.06 compared to the estimator's MSE of 0.01 and 0.02. 

 Our experiments in the main text compare the performance of the estimators across different values of $h$, but the selection of the short-horizon value remains an interesting open question in our work. Especially in real-life scenarios, where decision-makers have leverage in deciding how long to run experiments for, the selection of $h$ for the desired level of prediction accuracy is important to discuss for practical considerations. Potential trade-offs between more signal from the on-policy data versus increased observation costs as well as the increased covariate shift between the training and the test data may factor into this decision.
\clearpage
\bibliography{reference}

\end{document}